\documentclass{article}

\PassOptionsToPackage{numbers,compress}{natbib}
\usepackage[preprint]{neurips_2026}

\usepackage[utf8]{inputenc}
\usepackage[T1]{fontenc}
\usepackage{hyperref}
\usepackage{url}
\usepackage{booktabs}
\usepackage{amsfonts}
\usepackage{nicefrac}
\usepackage{microtype}
\usepackage{xcolor}
\usepackage{amsmath,amssymb,amsthm,mathtools,bm}
\usepackage{enumitem}
\usepackage{graphicx}
\usepackage{multirow}
\usepackage{array}
\usepackage{subcaption}
\usepackage{tabularx}

\hypersetup{colorlinks=true,citecolor=blue,linkcolor=blue,urlcolor=blue}

\newtheorem{definition}{Definition}
\newtheorem{theorem}{Theorem}
\newtheorem{proposition}{Proposition}
\newtheorem{remark}{Remark}

\newcommand{\E}{\mathbb{E}}
\newcommand{\Pbb}{\mathbb{P}}
\newcommand{\Var}{\mathrm{Var}}
\newcommand{\R}{\mathbb{R}}
\newcommand{\1}{\mathbf{1}}
\newcommand{\cX}{\mathcal{X}}

\title{Who Wins Where? Conformal Model Comparison for Local Superiority}
\author{%
  Yi Zhou \quad Baishi Li \quad Xuan Yao \quad Ke-Wei Huang \\
  Asian Institute of Digital Finance, National University of Singapore \\
  \texttt{zhouyi02@nus.edu.sg} \quad \texttt{baishili@u.nus.edu} \\
  \texttt{yaoxuan@nus.edu.sg} \quad \texttt{dishkw@nus.edu.sg}
}
\date{}

\begin{document}
\maketitle

\begin{abstract}
Standard model comparison is global, aggregating losses across the covariate space to declare a single winner. This can obscure heterogeneous performance, where different models are preferable in different regions. We introduce \emph{conformalized local model comparison}, a split-sample framework for constructing calibrated local best-model maps. Given a model comparison score, such as the difference between two squared losses, the method uses three disjoint splits to fit competing models, estimate local centers and scales from out-of-sample scores, and conformally calibrate residual uncertainty. At a target point, the procedure declares a local winner only when a one-sided conformal bound excludes a tie, with the score's sign determining the favored model. We prove finite-sample marginal control for one-sided erroneous declarations on the realized future comparison score, establish pointwise consistency of the localized mean-score estimator away from tie boundaries, show that aggregate comparison can disagree sharply with the prevalence of local superiority, and derive a squared-loss bias--variance decomposition that clarifies how model structure affects local wins. Synthetic and real-data experiments show that the method recovers heterogeneous winner regions, abstains under uncertainty, and yields higher conditional gain than global selection. Our code is available at \url{https://anonymous.4open.science/r/Model-Winner-submission-DD01}.
\end{abstract}

\section{Introduction}
\label{sec:intro}
Model comparison is central in machine learning. Given two trained predictors, standard practice compares their losses on a validation set, averages those losses across observations, and declares a single global winner. This workflow is simple and often appropriate when one model dominates uniformly. But not all practically important comparisons are global. A structured or theory-guided model may perform better where its inductive bias aligns with the data-generating mechanism, while a flexible black-box model may perform better elsewhere. Likewise, in expert-routing settings, different predictors may be preferable in different subspaces of the covariate space.

This paper studies \emph{conformalized local model comparison}. Rather than asking which model wins on average over the full distribution, we ask which model is superior near a target covariate value $x_0$. Our primary estimand is the conditional mean comparison score $\mu(x)=\E[S(X,Y)\mid X=x]$, such as the difference between two squared losses, where negative values favor model $A$ and positive values favor model $B$. This induces a local best-model map over the feature space. A finite-radius neighborhood average of the mean score provides a stable local smoothing target, and two additional summaries---a local majority notion and a local quantile notion---serve as secondary descriptors of local agreement. The main statistical development, however, focuses on local expected superiority.

A second and the main challenge is inference. Once comparison is made local, the procedure becomes more sensitive to heterogeneity but also more variable. To address this, we use a three-way split. One split fits the competing models, a second split estimates the local score center and scale from out-of-sample comparison scores, and a third split calibrates conformal residuals. A winner is declared only when a one-sided conformal upper bound falls below zero. This construction is deliberately modest: it provides finite-sample \emph{marginal} protection for errors on the \emph{realized future score}, not a finite-sample confidence statement for $\mu(x_0)$ itself.

The theoretical results are correspondingly focused. First, a localized estimator based on an independent evaluation sample recovers the correct sign of $\mu(x_0)$ asymptotically away from tie boundaries. Second, aggregate comparison targets the global mean score and can therefore disagree sharply with the prevalence of local superiority. Third, under squared loss, the local mean score decomposes into squared-bias and variance differences across the covariate space, making the framework especially natural for structured-versus-flexible comparisons. In summary, our contributions are: (i) a local model-comparison formulation based on the conditional mean score $\mu(x)$ and its finite-radius neighborhood average, together with secondary majority and quantile descriptors; (ii) a split-sample, locally centered conformal comparison rule with finite-sample marginal control for false local winner declarations on the realized future score; and (iii) consistency, global-local mismatch, and bias--variance results that characterize when and why local winner regions emerge.

The rest of the paper is organized as follows. Section~2 reviews the literature. Section~3 introduces the local targets. Section~4 presents the conformal procedure. Section~5 develops the theoretical properties. Section~6 summarizes the empirical design and results.

\section{Related Literature}

\noindent\textbf{Conformal prediction, localization, and weighting.}
Conformal prediction provides finite-sample, distribution-free predictive guarantees under exchangeability \cite{vovk2005,shafer2008,lei2018} and is summarized in the overview of \citet{angelopoulos2023}. Subsequent work has made conformal methods adaptive to heterogeneity, for example through conformalized quantile regression \cite{romano2019}, weighted conformal prediction under covariate shift \cite{tibshirani2019}, and localized conformal prediction \cite{guan2023}. These papers motivate the idea that calibration should respond to local feature-dependent structure rather than rely on a global residual distribution. Our paper draws on that intuition, but the inferential object is different: we calibrate a \emph{comparison score between two models}, not a prediction interval for a single model.

\noindent\textbf{Localized model selection and post-selection validity.}
Recent advances in conformal inference have addressed model selection. For instance, localized conformal model selection \cite{wang2026} provides a framework for choosing models that yield efficient conformal intervals while preserving coverage, and \cite{liang2025} studies predictive validity following data-dependent model choices. Our work shares their view that model suitability is heterogeneous, but our objective is complementary. Rather than optimizing interval length or characterizing post-selection predictive coverage, we repurpose conformal calibration as evidence for neighborhood-level superiority under a generic comparison score.

\noindent\textbf{Model comparison and routing.}
The broad motivation also connects to conditional predictive ability testing and local model routing. Global predictive-ability tests compare average losses across evaluation samples \cite{giacomini2006}, while mixture-of-experts and algorithm-selection perspectives emphasize that different models may be preferable in different regions of the input space \cite{jordan1994}. Our paper fits between these viewpoints: it retains the language of loss-based model comparison but replaces one-number global ranking with a local score map over the feature space.

\section{Problem Setup and Local Superiority Targets}

Let $(X,Y)$ be a random pair taking values in $\cX\times\R$, where $X$ denotes covariates and $Y$ denotes the response. We compare two fitted predictive models, $A$ and $B$, with predictors $\hat f_A,\hat f_B:\cX\to\R$. For each model $m\in\{A,B\}$, let $L_m(X,Y)$ denote its loss at $(X,Y)$. To compare the two models, we use a generic score
\begin{equation}
\label{eq:score}
S(X,Y) \coloneqq s\!\bigl(L_A(X,Y),L_B(X,Y);X\bigr),
\end{equation}
where negative scores favor $A$, positive scores favor $B$, and zero denotes a tie. We assume $s(\ell_A,\ell_B;x)$ is weakly increasing in $\ell_A$ and weakly decreasing in $\ell_B$. Examples include the gap $s_{\mathrm{gap}}(L_A,L_B)=L_A-L_B$, the log-ratio $s_{\mathrm{ratio}}(L_A,L_B)=\log((L_A+\tau)/(L_B+\tau))$ with $\tau>0$, and the standardized gap $s_{\mathrm{std}}(L_A,L_B;X)=(L_A-L_B)/v(X)$ with $v(X)>0$.

Our primary population target is the conditional mean score
\begin{equation}
\label{eq:mean}
\mu(x) \coloneqq \mathbb{E}[S(X,Y)\mid X=x].
\end{equation}
When $\mu(x_0)<0$, model $A$ is preferred; when $\mu(x_0)>0$, model $B$ is preferred. To stabilize local estimation, fix a target point $x_0\in\cX$ and a locality parameter $r>0$, and let $K_r:\cX\times\cX\to[0,\infty)$ be a nonnegative neighborhood weight. For the asymptotic theory in Section~5, when $\cX\subseteq\mathbb{R}^d$, we use compactly supported kernel localizers such as the box kernel
\[
K_r(x,x_0)=r^{-d}\1\{\|x-x_0\|\le r\},
\]
which corresponds to the base kernel $K(u)=\1\{\|u\|\le 1\}$, or the Epanechnikov form $K_r(x,x_0)=r^{-d}\bigl(1-\|x-x_0\|^2/r^2\bigr)_+$. In finite-sample implementations, KNN weights are a practical alternative when exact compact support is preferred. The corresponding neighborhood-average score is
\begin{equation}
\label{eq:theta}
\theta(x_0;r)
:=
\frac{\E\!\left[K_r(X,x_0)S(X,Y)\right]}{\E[K_r(X,x_0)]}
=
\frac{\E\!\left[K_r(X,x_0)\mu(X)\right]}{\E[K_r(X,x_0)]}.
\end{equation}
$\theta(x_0;r)$ is a smoothed version of $\mu(x_0)$; under continuity of $\mu$, it approaches $\mu(x_0)$ as $r\to 0$.

\begin{definition}[Local expected winner and best-model map]
Model $A$ is a \emph{local expected winner} at $x_0$ if $\mu(x_0)<0$. For a fixed radius $r$, model $A$ is the \emph{$r$-local expected winner} around $x_0$ if $\theta(x_0;r)<0$. The induced pointwise local best-model map is
\[
\mathcal{W}(x) = A \mathbb{I}_{\{\mu(x) < 0\}} + B \mathbb{I}_{\{\mu(x) > 0\}} + \text{tie} \cdot \mathbb{I}_{\{\mu(x) = 0\}},
\]
and its finite-radius analogue is defined by replacing $\mu(x)$ with $\theta(x;r)$.
\end{definition}

We also record two secondary summaries of local agreement derived from the local score distribution $F_{x_0,r}(t) \coloneqq \E[K_r(X,x_0)\1\{S(X,Y)\le t\}] / \E[K_r(X,x_0)]$ for $t\in\mathbb{R}$. The associated local majority score for model $A$, denoted $\pi_A(x_0;r)$, and the $(1-\beta)$ quantile score $Q_{1-\beta}(x_0;r)$ are
\[
\pi_A(x_0;r) \coloneqq \frac{\E[K_r(X,x_0)\1\{S(X,Y)<0\}]}{\E[K_r(X,x_0)]} \quad \text{and} \quad Q_{1-\beta}(x_0;r) \coloneqq \inf\{t\in\mathbb{R} : F_{x_0,r}(t)\ge 1-\beta\},
\]
for $\beta\in(0,1)$. In our experiments, we define the conditions $\pi_A(x_0;r)>1/2$ and $Q_{1-\beta}(x_0;r)<0$ to be \texttt{local\_def2} and \texttt{local\_def3}, respectively; they indicate that the neighborhood favors model $A$ under local-majority and local-quantile summaries. 
We use \texttt{local\_def1} for the local-mean rule based on the sign of $\theta(x_0;r)$. 
The remainder of the paper focuses on the local mean target $\mu(x)$, equivalently $\theta(x_0;r)$, because it is directly connected to our estimator and theory.


\section{Locally Centered Conformal Comparison}
\label{sec:conformal}

We now attach a finite-sample inferential layer to the comparison score $S(X,Y)$. Let $\{(X_i,Y_i)\}_{i=1}^n$ be an i.i.d. sample, split into three disjoint parts: a fitting split $I_{\mathrm{fit}}$ used only to train the competing predictors $\hat f_A$ and $\hat f_B$, an estimation split $I_{\mathrm{est}}$ used only to estimate a local center and scale from \emph{out-of-sample} scores, and a calibration split $I_{\mathrm{cal}}$ used only for conformal calibration. This separation removes the in-sample-loss issue that would arise if the same observations were used to fit the models and to estimate the local comparison score. Conditional on the models fitted on $I_{\mathrm{fit}}$, the population target $\mu(x)$ and out-of-sample scores $S_j$ for $j \in I_{\mathrm{est}}$ are evaluated according to \eqref{eq:score} and \eqref{eq:mean}. Let $\bar S_{\mathrm{est}}$ and $\widehat V_{\mathrm{est}}$ denote their empirical mean and uncorrected variance; the latter avoids Bessel corrections as the conformal guarantee requires only a strictly positive scale.

For a target point $x$, let $W_x \coloneqq \sum_{j\in I_{\mathrm{est}}} K_r(X_j,x)$. When $W_x > 0$, we define the localized center and regularized scale, for a regularization parameter $\lambda > 0$, as
\[
\widehat\theta_r(x) \coloneqq \sum_{j\in I_{\mathrm{est}}} \frac{K_r(X_j,x)}{W_x} S_j \quad \text{and} \quad \widehat\sigma_r(x) \coloneqq \Biggl[ \lambda + \sum_{j\in I_{\mathrm{est}}} \frac{K_r(X_j,x)}{W_x} \bigl(S_j-\widehat\theta_r(x)\bigr)^2 \Biggr]^{1/2}.
\]
If $W_x = 0$, we fall back to the global estimates $\widehat\theta_r(x) \coloneqq \bar S_{\mathrm{est}}$ and $\widehat\sigma_r(x) \coloneqq (\widehat V_{\mathrm{est}} + \lambda)^{1/2}$. The unstandardized procedure is recovered by setting $\widehat\sigma_r \equiv 1$.

On the calibration split $I_{\mathrm{cal}}$ of size $n_{\mathrm{cal}}$, we compute the scores $S_i$ analogously and form the localized residuals $R_i \coloneqq (S_i-\widehat\theta_r(X_i))/\widehat\sigma_r(X_i)$. Let $R_{(1)} \le \dots \le R_{(n_{\mathrm{cal}})}$ be the sorted residuals. Setting $k_\alpha \coloneqq \lceil(n_{\mathrm{cal}}+1)(1-\alpha)\rceil$, the conformal threshold is $\hat q_{1-\alpha} \coloneqq R_{(k_\alpha)}$ with $\hat q_{1-\alpha} \coloneqq \infty$ if $k_\alpha > n_{\mathrm{cal}}$.

For a test covariate $x_0\in\cX$, the conformal upper bound is $\widehat U_\alpha(x_0) \coloneqq \widehat\theta_r(x_0)+\widehat\sigma_r(x_0)\hat q_{1-\alpha}$. We declare model $A$ the local winner when $\widehat U_\alpha(x_0)<0$. This one-sided rule exclusively controls false declarations in favor of $A$; a symmetric lower-bound declaration for model $B$ follows analogously.

\begin{proposition}[Finite-sample marginal control for the realized score]
\label{prop:conformal_validity}
Let $(X_{n+1},Y_{n+1})$ be an independent test point. Under the assumption of exchangeability conditional on the training and estimation splits, and provided that $\widehat\theta_r$ and $\widehat\sigma_r$ are independent of the test data, the realized score $S_{n+1} \coloneqq s(L_A(X_{n+1},Y_{n+1}), L_B(X_{n+1},Y_{n+1}); X_{n+1})$ satisfies
\[
\mathbb{P}\bigl(S_{n+1}\le \widehat U_\alpha(X_{n+1})\bigr)\ge 1-\alpha \quad \text{and} \quad \mathbb{P}\bigl(\widehat U_\alpha(X_{n+1})<0,\ S_{n+1}\ge 0\bigr)\le \alpha.
\]
\end{proposition}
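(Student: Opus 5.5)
The plan is to condition on the fitting and estimation splits and reduce the claim to the standard split-conformal quantile lemma applied to the localized residuals. Conditional on $\mathcal{D}_{\mathrm{fit}}\cup\mathcal{D}_{\mathrm{est}}$, the predictors $\hat f_A,\hat f_B$ are fixed functions, so the score map $(x,y)\mapsto s(L_A(x,y),L_B(x,y);x)$ is fixed. By assumption, $\widehat\theta_r$ and $\widehat\sigma_r$ are fixed functions as well; this holds either because $W_x>0$ gives the kernel formula or because of the global fallback. Moreover, $\widehat\sigma_r(x)\ge\lambda^{1/2}>0$ everywhere. Hence the residual map
\[
g(x,y)\coloneqq \frac{s(L_A(x,y),L_B(x,y);x)-\widehat\theta_r(x)}{\widehat\sigma_r(x)}
\]
is a fixed measurable function. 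Setting $R_{n+1}\coloneqq g(X_{n+1},Y_{n+1})$, the assumed conditional exchangeability of $\{(X_i,Y_i)\}_{i\in I_{\mathrm{cal}}}\cup\{(X_{n+1},Y_{n+1})\}$ makes $(R_i)_{i\in I_{\mathrm{cal}}}\cup\{R_{n+1}\}$ exchangeable.

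The key steps follow in order. First, if $k_\alpha>n_{\mathrm{cal}}$, then $\hat q_{1-\alpha}=\infty$, so $\widehat U_\alpha=\infty$ and the first claim is trivial. Otherwise, I invoke the standard rank argument. Among $n_{\mathrm{cal}}+1$ exchangeable reals, the probability that $R_{n+1}$ exceeds the $k_\alpha$-th smallest of the calibration residuals is at most $(n_{\mathrm{cal}}+1-k_\alpha)/(n_{\mathrm{cal}}+1)\le\alpha$. Ties are handled by noting that $R_{n+1}\le R_{(k_\alpha)}$ holds whenever the rank of $R_{n+1}$ among all $n_{\mathrm{cal}}+1$ values is at most $k_\alpha$, with ties broken in its favor. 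This gives $\Pbb(R_{n+1}\le\hat q_{1-\alpha}\mid\mathcal{D}_{\mathrm{fit}},\mathcal{D}_{\mathrm{est}})\ge 1-\alpha$. Second, since $\widehat\sigma_r(X_{n+1})>0$, the event $R_{n+1}\le\hat q_{1-\alpha}$ is identical to $S_{n+1}\le\widehat\theta_r(X_{n+1})+\widehat\sigma_r(X_{n+1})\hat q_{1-\alpha}=\widehat U_\alpha(X_{n+1})$. Third, taking expectation over the conditioning splits via the tower property yields the marginal bound.

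For the second inequality, observe that on the event $\{\widehat U_\alpha(X_{n+1})<0,\ S_{n+1}\ge0\}$ we have $S_{n+1}\ge 0>\widehat U_\alpha(X_{n+1})$. This event is therefore contained in $\{S_{n+1}>\widehat U_\alpha(X_{n+1})\}$, whose probability is at most $\alpha$ by the first part.

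The only delicate point, and what I expect to be the main obstacle, is the conditioning bookkeeping. One must justify that $\widehat\theta_r$ and $\widehat\sigma_r$, including the $W_x=0$ branch and the evaluation at the random point $X_{n+1}$ versus at the calibration points, are the same deterministic function applied to each point. Without this, exchangeability of the residuals would fail. This is exactly what the independence hypothesis and the disjointness of $I_{\mathrm{est}}$ from $I_{\mathrm{cal}}$ provide, so I would state it explicitly as a lemma before the rank argument. I would also stress that nothing about $\mu$ or $\theta(x_0;r)$ is claimed.
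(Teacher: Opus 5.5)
Your proposal is correct and follows the paper's proof essentially step for step. You condition on the fitting and estimation splits so that the score map, $\widehat\theta_r$ and $\widehat\sigma_r$ are fixed; apply the exchangeable-rank argument, including the trivial $k_\alpha>n_{\mathrm{cal}}$ case; use $\widehat\sigma_r>0$ to convert the residual event into $S_{n+1}\le\widehat U_\alpha(X_{n+1})$; marginalize; and bound the false-winner event by its containment in $\{S_{n+1}>\widehat U_\alpha(X_{n+1})\}$. Your explicit checks that $\widehat\sigma_r\ge\lambda^{1/2}$ on both branches and that the $W_x=0$ fallback is part of the same fixed function are bookkeeping the paper leaves implicit.
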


Proposition~\ref{prop:conformal_validity} is intentionally phrased for the \emph{realized future score}. It does not provide a finite-sample confidence interval or hypothesis test for $\mu(x_0)$ or $\theta(x_0;r)$. Instead, it guarantees that the one-sided conformal rule makes false winner declarations for the realized future score at rate at most $\alpha$ marginally over the test point. Because the conformal guarantee relies exclusively on the exchangeability of the residuals, Proposition~\ref{prop:conformal_validity} remains strictly valid for any functional choice of the comparison score (e.g., gap, log-ratio, or standardized) and any localized centering heuristic, including estimators targeting the majority or quantile summaries defined in Section 3.

For a fixed $x_0$, the same construction yields the one-sided conformal $p$-value $\hat p(x_0) \coloneqq \bigl(1+\sum_{i\in I_{\mathrm{cal}}}\1\{R_i\ge -\widehat\theta_r(x_0)/\widehat\sigma_r(x_0)\}\bigr)/(|I_{\mathrm{cal}}|+1)$, setting $\widehat\sigma_r(x_0)\equiv 1$ if unstandardized. This $p$-value is a slightly less conservative companion to the upper-bound rule $\widehat U_\alpha(x_0)<0$: under continuous residuals the two summaries differ by at most one calibration rank due to empirical-quantile conventions, and ties at the threshold---e.g., for discrete scores---preserve this small discrepancy. 

\section{Theoretical Properties}
\label{sec:theory}
For the first two results, we condition on the predictors produced by the independent fitting split. This isolates the test-time randomness: the evaluation observations remain i.i.d., and the conditional mean $\mu(x)$ from \eqref{eq:mean} is unambiguously defined. Section~\ref{subsec:biasvar} subsequently reintroduces training randomness to interpret an ex-ante version of the expected local score under squared loss.

\subsection{Asymptotic correctness for local expected superiority}

We first study whether a localized estimator recovers the correct sign of $\mu(x_0)$ at a target covariate $x_0$. Using an \emph{independent evaluation sample} and a bandwidth sequence $r_n\to 0$, we define the population-analogue of the estimation-split estimator from Section~4:
\begin{equation}
\label{eq:local-estimator}
\hat\mu_n(x_0) \coloneqq \frac{\sum_{i=1}^n K_{r_n}(X_i,x_0)S_i}{\sum_{i=1}^n K_{r_n}(X_i,x_0)},
\end{equation}
where $S_i$ is the realized score \eqref{eq:score} for the $i$-th observation. Let the true local winner be $\delta^\star(x_0) \coloneqq A$ if $\mu(x_0)<0$ and $B$ if $\mu(x_0)>0$. Its empirical counterpart, $\hat\delta_n(x_0)$, is assigned to $A$ if $\hat\mu_n(x_0)<0$ and to $B$ otherwise. Breaking estimated ties ($\hat\mu_n(x_0)=0$) in favor of $B$ is a convention that differs from the population map $\mathcal{W}$ only on measure-zero events for continuous scores, leaving asymptotic sign recovery unaffected. Furthermore, because sign recovery is inherently unidentifiable on the exact population tie boundary ($\mu(x_0)=0$), the following theorem focuses strictly on the case $\mu(x_0)\neq 0$.

\begin{theorem}[Asymptotic correctness of local winner estimation]
\label{thm:local_winner_consistency}
Fix an interior point $x_0\in\cX$. Assume: (1) $X\in\mathbb{R}^d$ has a density $p_X$ that is continuous at $x_0$ with $p_X(x_0)>0$; (2) the localizer is $K_r(x,x_0)=r^{-d}K((x-x_0)/r)$, where $K$ is bounded, nonnegative, compactly supported, and satisfies $\int_{\mathbb{R}^d}K(u)\,du>0$; (3) $\mu(x)$ is continuous at $x_0$; (4) there exists a neighborhood $U(x_0)$ of $x_0$ such that $\sup_{x\in U(x_0)} \E[S(X,Y)^2\mid X=x] < \infty$; and (5) $r_n\to 0$ and $nr_n^d\to\infty$. Then $\hat\mu_n(x_0)\overset{p}{\longrightarrow}\mu(x_0)$. Consequently, if $\mu(x_0)\neq 0$, then $\mathbb{P}(\hat\delta_n(x_0)=\delta^\star(x_0))\longrightarrow 1$. The same applies to the estimation-split estimator in Section~4 with $n$ replaced by $|I_{\mathrm{est}}|$.
\end{theorem}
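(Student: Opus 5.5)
We write $\hat\mu_n(x_0)=N_n/D_n$ with
\[
D_n\coloneqq \frac1n\sum_{i=1}^n K_{r_n}(X_i,x_0),\qquad N_n\coloneqq \frac1n\sum_{i=1}^n K_{r_n}(X_i,x_0)S_i .
\]
We will show separately that $D_n\overset{p}{\to} p_X(x_0)\kappa$ and $N_n\overset{p}{\to}\mu(x_0)p_X(x_0)\kappa$, where $\kappa\coloneqq\int K(u)\,du>0$. The continuous mapping theorem then gives $\hat\mu_n(x_0)\overset{p}{\to}\mu(x_0)$, since the limit of $D_n$ is strictly positive. Positivity also means $\Pbb(D_n>0)\to1$, so the ratio is well defined with probability tending to one.

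Throughout, we condition on the fitted predictors, as announced at the start of Section~\ref{sec:theory}. Then the $(X_i,S_i)$ are i.i.d., and each of $D_n$ and $N_n$ is handled by a bias--variance computation.

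\textbf{Bias of $D_n$.} Substituting $x=x_0+r_nu$ gives
\[
\E D_n=\int K(u)\,p_X(x_0+r_nu)\,du .
\]
Because $K$ has compact support, say in $\{\|u\|\le M\}$, only $\|u\|\le M$ contributes. Continuity of $p_X$ at $x_0$ then makes $p_X(x_0+r_nu)\to p_X(x_0)$ uniformly over this support. Since $K$ is bounded, $\E D_n\to p_X(x_0)\kappa$.

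\textbf{Bias of $N_n$.} The tower property gives
\[
\E N_n=\int K(u)\,\mu(x_0+r_nu)\,p_X(x_0+r_nu)\,du .
\]
Once $r_nM$ is small enough that the support sits inside $U(x_0)$, assumption (4) with Jensen's inequality makes $|\mu|$ bounded there. Continuity of $\mu$ and $p_X$ at $x_0$ then lets dominated convergence give $\E N_n\to\mu(x_0)p_X(x_0)\kappa$. A minor technical point is that $p_X$ must itself be bounded near $x_0$; continuity at $x_0$ supplies this on a small ball.

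\textbf{Variances.} By independence,
\[
\Var(N_n)\le \frac1n\E\bigl[K_{r_n}(X,x_0)^2S^2\bigr]=\frac{1}{nr_n^d}\int K(u)^2\,\E\bigl[S^2\mid X=x_0+r_nu\bigr]\,p_X(x_0+r_nu)\,du .
\]
The integral stays bounded by $\|K\|_\infty\kappa\cdot\sup_{U(x_0)}\E[S^2\mid X]\cdot\sup_{\text{ball}}p_X$. Hence $\Var(N_n)=O\bigl(1/(nr_n^d)\bigr)\to0$ by assumption (5). The same bound with $S\equiv1$ handles $D_n$. Chebyshev's inequality then gives both convergences in probability.

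For sign recovery, suppose first $\mu(x_0)<0$, and let $\epsilon=|\mu(x_0)|/2$. On the event $\{|\hat\mu_n-\mu(x_0)|<\epsilon\}$, which has probability tending to one, we have $\hat\mu_n<0$ and so $\hat\delta_n=A$. If instead $\mu(x_0)>0$, the same event gives $\hat\mu_n>0$ and so $\hat\delta_n=B$; the tie convention is irrelevant here.

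For the estimation-split statement, the scores in $I_{\mathrm{est}}$ are i.i.d. conditional on $I_{\mathrm{fit}}$, so the argument applies verbatim with $n$ replaced by $|I_{\mathrm{est}}|$. The fallback to the global estimates when $W_{x_0}=0$ occurs on an event whose probability vanishes, because $D_n>0$ with probability tending to one.

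The main obstacle is the bias step for $N_n$: one only has continuity of $\mu$ and $p_X$ at the single point $x_0$, plus the local second-moment bound. The key is to use compact support of $K$ to confine everything to a shrinking ball inside $U(x_0)$ where all quantities are bounded, and then apply dominated convergence.
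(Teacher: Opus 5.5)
Your proposal is correct and follows essentially the same route as the paper's proof. It uses the same normalized $N_n$ and $D_n$, change-of-variables bias limits confined to a shrinking ball by the compact support of $K$ (with Jensen giving local boundedness of $\mu$), variance bounds of order $1/(nr_n^d)$, a ratio argument (Slutsky / continuous mapping with $\Pbb(D_n>0)\to1$), and the same sign-recovery and estimation-split reductions; the remaining differences (uniform convergence rather than dominated convergence for $\E D_n$, and the margin $|\mu(x_0)|/2$ rather than $|\mu(x_0)|$) are cosmetic.
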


\subsection{Aggregate comparison targets the global mean, not local prevalence}

Define the global mean score $\mu_{\mathrm{glob}} \coloneqq \E[S(X,Y)]$ and the aggregate empirical score $\bar S_n \coloneqq \frac{1}{n}\sum_{i=1}^n S_i$. The aggregate selector chooses $\hat\delta_n^{\mathrm{agg}} \coloneqq A$ if $\bar S_n < 0$ and $B$ otherwise. In contrast, the prevalence of local superiority for model $A$ is $\rho_A \coloneqq \mathbb{P}(\mu(X)<0)$, which depends exclusively on the sign of the local conditional mean score defined in \eqref{eq:mean}.

\begin{proposition}[Aggregate selection can disagree with local superiority prevalence]
\label{prop:global_local_mismatch}
Suppose $\E[|S(X,Y)|]<\infty$. Let the true aggregate winner be $\delta^{\mathrm{agg},\star} \coloneqq A$ if $\mu_{\mathrm{glob}} < 0$ and $B$ if $\mu_{\mathrm{glob}} > 0$. If $\mu_{\mathrm{glob}} \neq 0$, then $\mathbb{P}(\hat\delta_n^{\mathrm{agg}}=\delta^{\mathrm{agg},\star})\longrightarrow 1$. Moreover, for any $q\in(1/2,1)$, there exists a joint distribution of $(X,S)$ such that $\mathbb{P}(\mu(X)<0)\ge q$ but $\E[S]>0$. In such cases, $\mathbb{P}(\hat\delta_n^{\mathrm{agg}}=B)\longrightarrow 1$ even though model $A$ is locally superior on at least a $q$-fraction of the covariate distribution.
\end{proposition}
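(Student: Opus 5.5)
The proof has two independent parts: a law-of-large-numbers argument for the aggregate selector, and an explicit two-point construction for the mismatch.

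\emph{Part 1 (consistency of aggregate selection).} Conditional on the fitted predictors, the $S_i$ are i.i.d. with $\E|S|<\infty$. The weak law of large numbers therefore gives $\bar S_n\overset{p}{\longrightarrow}\mu_{\mathrm{glob}}$. If $\mu_{\mathrm{glob}}<0$, then
\[
\mathbb{P}(\hat\delta_n^{\mathrm{agg}}\ne A)=\mathbb{P}(\bar S_n\ge 0)\le \mathbb{P}\bigl(|\bar S_n-\mu_{\mathrm{glob}}|\ge|\mu_{\mathrm{glob}}|\bigr)\to 0 .
\]
If $\mu_{\mathrm{glob}}>0$, then $\mathbb{P}(\bar S_n<0)\to 0$ in the same way. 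The tie convention ($\bar S_n=0\mapsto B$) is harmless here because $\mu_{\mathrm{glob}}\ne 0$.

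\emph{Part 2 (construction).} Fix $q\in(1/2,1)$ and let $X$ take two values $x_1,x_2$ with $\mathbb{P}(X=x_1)=q$ and $\mathbb{P}(X=x_2)=1-q$. Set $S=-1$ deterministically given $X=x_1$, and $S=M$ given $X=x_2$, where
\[
M>\frac{q}{1-q}.
\]
Then $\mu(x_1)=-1<0$, so $\mathbb{P}(\mu(X)<0)=q$, while $\E[S]=-q+(1-q)M>0$.

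If the statement is read as requiring a continuous $X$ (or a score realized from actual losses), the construction adapts directly:
\begin{itemize}[leftmargin=*]
\item Take $X\sim\mathrm{Unif}[0,1]$ and $\mu(x)=-\1\{x<q\}+M\1\{x\ge q\}$, with $S=\mu(X)$ or $S=\mu(X)+\varepsilon$ for independent mean-zero noise $\varepsilon$.
\item To realize $S$ as a loss gap $L_A-L_B$ under squared loss, let $Y=0$, $\hat f_B\equiv 0$, and $\hat f_A(x)$ equal $0$ on the $B$-region and $\sqrt{M}$ on the $A$-region. This gives a gap of $0$ or $M$. To obtain strictly negative values, instead let $\hat f_B(x)=1$ on $\{x<q\}$ with $\hat f_A=0$ there, and $\hat f_A=\sqrt{M}$ with $\hat f_B=0$ on $\{x\ge q\}$.
\end{itemize}
Integrability holds trivially since $S$ is bounded.

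\emph{Conclusion.} Apply Part 1 to the constructed distribution: since $\mu_{\mathrm{glob}}>0$, we get $\mathbb{P}(\hat\delta_n^{\mathrm{agg}}=B)\to 1$, while $\rho_A\ge q$.

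There is no real obstacle. The only care needed is to check that the construction is compatible with the score framework (monotone $s$, fitted predictors), and the explicit loss choices above handle that.
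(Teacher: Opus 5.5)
Your proposal is correct and follows the paper's proof. The first part is the weak law of large numbers plus the bound $\mathbb{P}(|\bar S_n-\mu_{\mathrm{glob}}|\ge|\mu_{\mathrm{glob}}|)\to 0$, and the second is a two-region step construction; your $\mathrm{Unif}[0,1]$ variant with optional independent mean-zero noise is exactly the paper's construction with $a=1$, $b=M>q/(1-q)$. The two-point $X$ and the squared-loss realization are harmless extras; your first loss-gap sketch has the regions swapped (and only yields gaps $0$ and $M$), but your corrected version, with gap $-1$ on $\{x<q\}$ and $M$ on $\{x\ge q\}$, is fine.
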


Proposition~\ref{prop:global_local_mismatch} clarifies an elementary but practically profound point routinely obscured in standard model evaluation: the global winner has no mathematical binding to the proportion of local winners. Aggregate averaging evaluates the net magnitude of the global mean score $\mu_{\mathrm{glob}}$, allowing massive losses in a small covariate region to dominate the sum. Conversely, local prevalence depends strictly on the spatial frequency of the expected score's sign. Consequently, global selection can completely reject a model even if it is the strictly superior choice for $99\%$ of the population.

\subsection{Ex-Ante Bias-Variance Interpretation for Squared Loss}
\label{subsec:biasvar}

While the preceding sections condition on fitted predictors to evaluate superiority, we now reintroduce the randomness of the training data to understand \emph{why} heterogeneous regions of superiority emerge. By analyzing an ex-ante expected local score over both the training procedure and the test distribution, we can mechanically trace local wins back to standard learning-theoretic properties.

\begin{proposition}[Squared-loss local bias--variance decomposition]
\label{prop:squared_loss_decomp}
Assume \(Y=f^\star(X)+\varepsilon\) with \(\E[\varepsilon\mid X]=0\), and assume the training sample \(\mathcal D\) used to fit the predictors is independent of the fresh test pair \((X,Y)\). Let the training-random squared-loss gap be $S_{\mathcal{D}}(X,Y) \coloneqq (\hat f_A(X)-Y)^2 - (\hat f_B(X)-Y)^2$, where the fitted predictors are trained on a random training sample $\mathcal{D}$. For $m\in\{A,B\}$, define $\mathrm{Bias}_m(x) \coloneqq \E_{\mathcal{D}}[\hat f_m(x)]-f^\star(x)$ and $\mathrm{Var}_m(x) \coloneqq \Var_{\mathcal{D}}(\hat f_m(x))$, where expectations are taken over $\mathcal{D}$. If $\E_{\mathcal{D}}[\hat f_m(x)^2]<\infty$ and $\E[\varepsilon^2\mid X=x]<\infty$, then the ex-ante conditional mean score $\mu_{\mathrm{ex}}(x) \coloneqq \E_{\mathcal{D},Y\mid X=x}[S_{\mathcal{D}}(X,Y)\mid X=x]$ satisfies
\begin{equation}
\label{eq:conditional-bias-var-main}
\mu_{\mathrm{ex}}(x) = \mathrm{Bias}_A^2(x)-\mathrm{Bias}_B^2(x)+\mathrm{Var}_A(x)-\mathrm{Var}_B(x).
\end{equation}
Equivalently, the ex-ante smoothed neighborhood score $\theta_{\mathrm{ex}}(x_0;r) \coloneqq \E[K_r(X,x_0)\mu_{\mathrm{ex}}(X)]/\E[K_r(X,x_0)]$ satisfies
\begin{equation}
\label{eq:local-bias-var-main}
\theta_{\mathrm{ex}}(x_0;r) = \frac{\E\bigl[K_r(X,x_0)\bigl\{\mathrm{Bias}_A^2(X)-\mathrm{Bias}_B^2(X)+\mathrm{Var}_A(X)-\mathrm{Var}_B(X)\bigr\}\bigr]}{\E[K_r(X,x_0)]}.
\end{equation}
\end{proposition}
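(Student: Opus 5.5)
The plan is to prove the pointwise identity \eqref{eq:conditional-bias-var-main} by expanding the squared-loss gap algebraically and using two independences: the noise is mean-zero given $X$, and the training sample $\mathcal D$ is independent of the test pair. The neighborhood identity \eqref{eq:local-bias-var-main} will then follow by substituting the pointwise identity into the definition of $\theta_{\mathrm{ex}}$.

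Fix $x$ and write $Y=f^\star(x)+\varepsilon$ on $\{X=x\}$. For each $m\in\{A,B\}$ set $D_m\coloneqq \hat f_m(x)-f^\star(x)$, which is a function of $\mathcal D$ only. Then
\[
(\hat f_m(x)-Y)^2 = D_m^2 - 2D_m\varepsilon + \varepsilon^2 .
\]
Differencing over $m$ cancels $\varepsilon^2$, so we never need to subtract two possibly infinite quantities. This gives
\[
S_{\mathcal D}(x,Y)=D_A^2-D_B^2-2(D_A-D_B)\varepsilon .
\]
Each term is integrable. The moment assumption $\E_{\mathcal D}[\hat f_m(x)^2]<\infty$ gives $\E[D_m^2]<\infty$. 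Independence of $\mathcal D$ and $(X,Y)$, together with Cauchy--Schwarz and $\E[\varepsilon^2\mid X=x]<\infty$, gives $\E|D_m\varepsilon|\le (\E D_m^2)^{1/2}(\E[\varepsilon^2\mid X=x])^{1/2}<\infty$.

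Now take the conditional expectation given $X=x$. The cross term factors, by independence of $\mathcal D$ from the test pair (the distribution of $\mathcal D$ is unaffected by conditioning on $X=x$), as
\[
\E[D_m\varepsilon\mid X=x]=\E_{\mathcal D}[D_m]\,\E[\varepsilon\mid X=x]=0 .
\]
This leaves $\mu_{\mathrm{ex}}(x)=\E_{\mathcal D}[D_A^2]-\E_{\mathcal D}[D_B^2]$. Since $f^\star(x)$ is deterministic, the standard identity $\E[D_m^2]=(\E D_m)^2+\Var(D_m)=\mathrm{Bias}_m^2(x)+\mathrm{Var}_m(x)$ yields \eqref{eq:conditional-bias-var-main}.

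For \eqref{eq:local-bias-var-main}, substitute the pointwise identity inside $\E[K_r(X,x_0)\mu_{\mathrm{ex}}(X)]$. This requires $\E[K_r(X,x_0)]>0$, which is implicit in the definition of $\theta_{\mathrm{ex}}$, and integrability of $K_r\mu_{\mathrm{ex}}$, which I would take as part of that definition being well posed.

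The only delicate step is the conditioning: making precise that conditioning on $X=x$ does not change the law of $\mathcal D$, so that the cross term factorizes. I would handle this by working with regular conditional distributions, or equivalently by treating $x$ as fixed and integrating over the product measure of $\mathcal D$ and $\varepsilon\mid X=x$ via Fubini. The integrability bounds above justify Fubini.
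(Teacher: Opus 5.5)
Your proposal is correct and follows essentially the same argument as the paper. Both expand the squared loss around $f^\star(x)$, eliminate the cross term using $\E[\varepsilon\mid X=x,\mathcal D]=\E[\varepsilon\mid X=x]=0$ (from the independence of $\mathcal D$ and the test pair), apply the bias--variance identity, and then integrate the pointwise identity against $K_r$. The differences are cosmetic: you difference the two losses before taking expectations, factorize the cross term, and spell out the integrability and Fubini justification. The paper instead subtracts the two per-model risk identities, each of which is finite under the stated moment condition, and uses the tower property over $\mathcal D$.
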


Although mathematically elementary, Proposition~\ref{prop:squared_loss_decomp} provides the fundamental explanatory mechanism for local conformal comparison. Because irreducible noise cancels out, the ex-ante local mean score reduces strictly to a competition between squared-bias and variance differences. This decomposition formalizes the spatial trade-off between a structured, high-bias/low-variance model (e.g., OLS) and a flexible, low-bias/high-variance model (e.g., Gradient Boosting). In smooth regions where the data generating mechanism aligns with the structured model's inductive bias, its variance advantage dictates local superiority. Conversely, in highly non-linear regions, the flexible model's localized bias reduction overcomes its higher variance, forcing a flip in the local winner map. We empirically validate this exact spatial mechanism in Section~6.

\section{Experiments}
\label{sec:experiments}

We validate our approach on synthetic and real-world datasets. The synthetic experiments serve three purposes. First, they test whether local comparison can recover heterogeneous winner regions, as formalized in Proposition~\ref{prop:global_local_mismatch}. Second, they test whether conformal calibration converts local evidence into statistically reliable winner declarations, as guaranteed by Proposition~\ref{prop:conformal_validity}. Third, they provide diagnostics for the asymptotic and bias--variance mechanisms in Theorem~\ref{thm:local_winner_consistency} and Proposition~\ref{prop:squared_loss_decomp}. The real-data experiments, reported in Section~\ref{subsec:real_exp_design}, evaluate whether the conformal declarations select regions with positive realized gain when the oracle local winner map is unavailable.

\subsection{Synthetic data experiments}
\label{subsec:exp_design}

We use five one-dimensional synthetic designs. This setting makes local winner regions visually inspectable and allows us to evaluate against an oracle local comparison target. In all designs, $X\sim \mathrm{Unif}[0,1]$ and $Y=f^\star(X)+\varepsilon$. Model $A$ is a gradient-boosted tree, implemented by \texttt{LGBMRegressor} with $500$ estimators, learning rate $0.05$, and depth $6$; model $B$ is ordinary linear regression. Each replicate has $n=5000$ observations, split evenly into fitting, estimation, calibration, and test splits. Results are averaged over $10$ random seeds, with nominal level $\alpha=0.10$ and KNN neighborhood size $k=\lfloor\sqrt{|I_{\mathrm{est}}|}\rfloor\approx 35$. The primary score is the squared-loss gap
\begin{equation}
\label{eq:gap_mse}
S(X,Y) = (\hat f_A(X)-Y)^2 - (\hat f_B(X)-Y)^2.
\end{equation}
where $S<0$ favors the boosted tree. We evaluate local winner using the post-fit oracle $\mu(x)=(\hat f_A(x)-f^\star(x))^2-(\hat f_B(x)-f^\star(x))^2$, which compares the two already-fitted predictors at each covariate value. Full DGP formulas and design motivations for all five synthetic cases are provided in Appendix~\ref{app:DGP}. Evaluation metrics for the synthetic experiments are defined in Appendix~\ref{app:metrics}.

\paragraph{Results \#1: Local comparison recovers winner regions hidden by global averages.}
Figure~\ref{fig:three_winner} illustrates a global--local mismatch scenario, corresponding to Case~3 in Table~\ref{tab:dgps_app}. 
Here, the data-generating function is linear outside $[0.42,0.58]$ but contains a strong nonlinear oscillation inside this interval. 
As a result, model $A$ wins inside the island, while model $B$ wins outside it. 
The baseline \texttt{global\_mean} measures only the aggregate mean score over the full input space and declares a single winner everywhere. 
Because the island has sufficiently large loss differences, \texttt{global\_mean} declares $A$ everywhere and misses most of the region where $B$ is locally better.

To diagnose this structure, we evaluate the three non-conformal local winner maps introduced in Section~3: \texttt{local\_def1}, \texttt{local\_def2}, and \texttt{local\_def3}. 
These rules are used to study local-region recovery. 
In Case~3, \texttt{global\_mean} attains only $0.293$ winner accuracy, compared with $0.823$ for \texttt{local\_def1} and $0.908$ for \texttt{local\_def3}; full results are reported in Table~\ref{tab:winner_metrics_app}.
As shown in Figure~\ref{fig:three_winner}, \texttt{local\_def1} closely follows the oracle $\theta(x_0;r)$, \texttt{local\_def2} is more sensitive to local score noise, and \texttt{local\_def3} is more conservative near the island boundary. 
These local rules reveal winner regions that are invisible to \texttt{global\_mean}, supporting Proposition~\ref{prop:global_local_mismatch}: the aggregate winner and the prevalence of local winners are distinct objects.

\begin{figure}[t]
\centering
\includegraphics[width=0.85\linewidth]{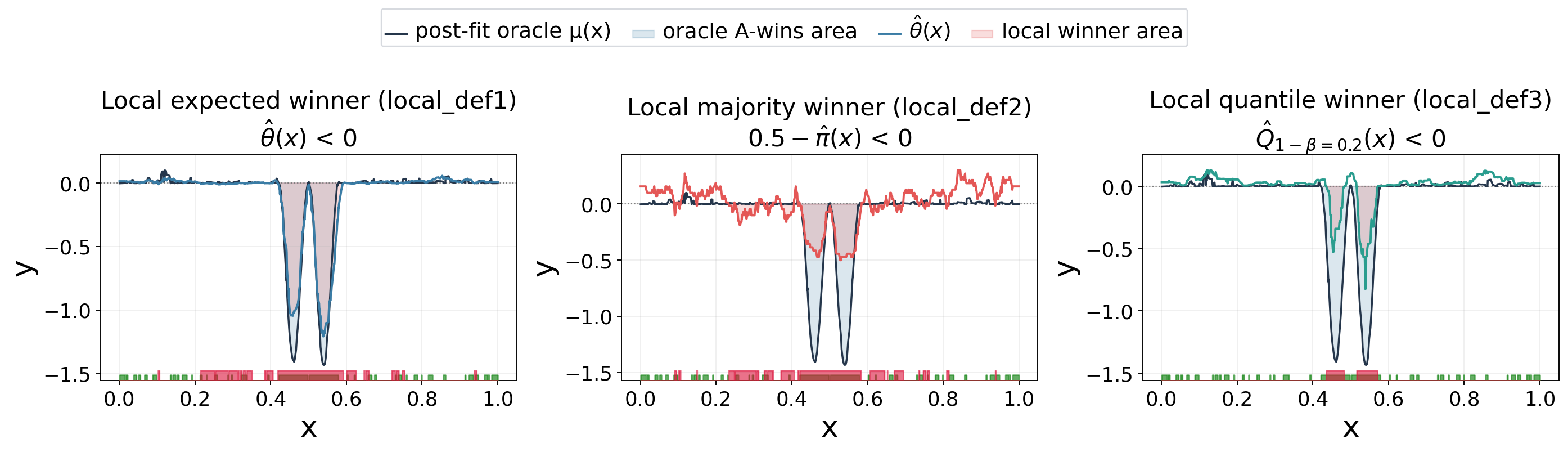}
\caption{Three non-conformal local winner methods on Case~3. Left: local expected winner. Middle: local majority winner. Right: local quantile winner with $\beta=0.20$.}
\label{fig:three_winner}
\end{figure}

\paragraph{Results \#2: Conformal calibration controls false winner declarations.}

We next test whether local winner evidence can be converted into statistically controlled declarations. 
Figure~\ref{fig:validity} plots the false winner rate against the nominal level $\alpha$ across all five synthetic cases. 
The conformal methods follow the one-sided upper-bound rule in Section~4, declaring $A$ only when $\widehat U_\alpha(x)<0$. 
Specifically, \texttt{global\_split\_cp} uses a constant center $\bar S_{\mathrm{est}}$ and unit scale, \texttt{local\_split\_cp} uses the localized center $\widehat\theta_r(x)$ with unit scale, and \texttt{local\_std\_cp} uses both $\widehat\theta_r(x)$ and the localized scale $\widehat\sigma_r(x)$. 
These variants separate the effects of localizing the center and the scale. Across all five cases, the conformal methods stay below the diagonal throughout, consistent with Proposition~\ref{prop:conformal_validity}. 
In contrast, non-conformal plug-in maps can over-declare when realized scores are noisy. 

\begin{figure}[t]
\centering
\includegraphics[width=\linewidth]{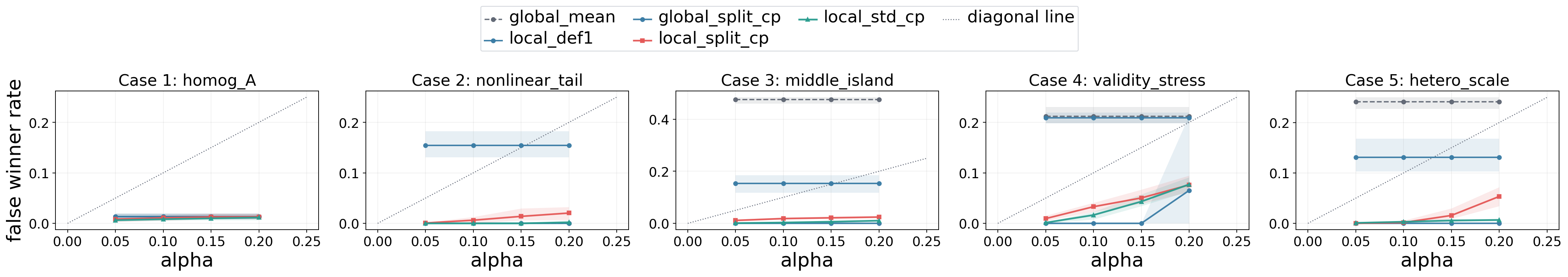}
\caption{False winner rate versus $\alpha$. The diagonal marks the target level. Conformal methods control false winner declarations, while non-conformal methods can over-declare under noisy realized scores.}
\label{fig:validity}
\end{figure}

Figure~\ref{fig:winner_maps} illustrates the distinction between identifying and certifying a local winner in Case~3. The non-conformal local mean rule declares $A$ when the estimated local mean is negative. The conformal rule declares $A$ only when the conformal upper bound is below zero, producing a conservative subset of the estimated nonlinear island. Thus, conformal calibration changes the output from a forced local winner map to a statistically controlled declaration rule.

\begin{figure}[t]
\centering
\includegraphics[width=0.55\linewidth]{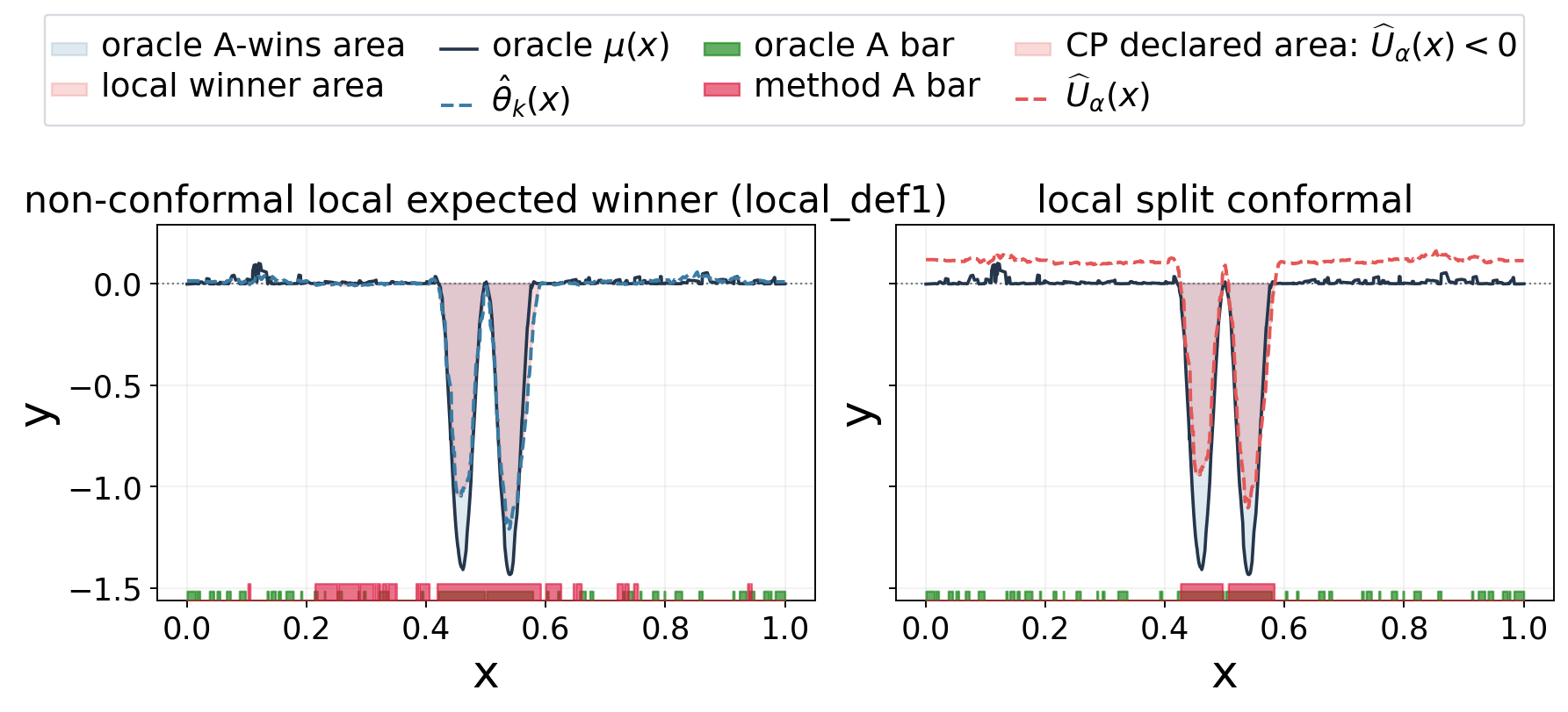}
\caption{Case~3 comparison between non-conformal local winner identification and conformal winner declaration. The conformal rule certifies a conservative subset of the local $A$-winning island.}
\label{fig:winner_maps}
\end{figure}

Among conformal methods, local centering and local scaling improve the efficiency of declarations. A global conformal rule uses one center for the entire covariate space, so it either certifies $A$ broadly or abstains broadly. Local conformal methods can instead select only the regions where $A$ is locally favorable. 
Figures~\ref{fig:selection_power} report selection rate and oracle $A$ power as functions of $\alpha$. Local conformal methods select fewer points than the oracle $A$ region, reflecting the price of finite-sample error control, but their selected points are more spatially targeted than those of the global conformal baseline. 

\begin{figure}[t]
\centering
\includegraphics[width=\linewidth]{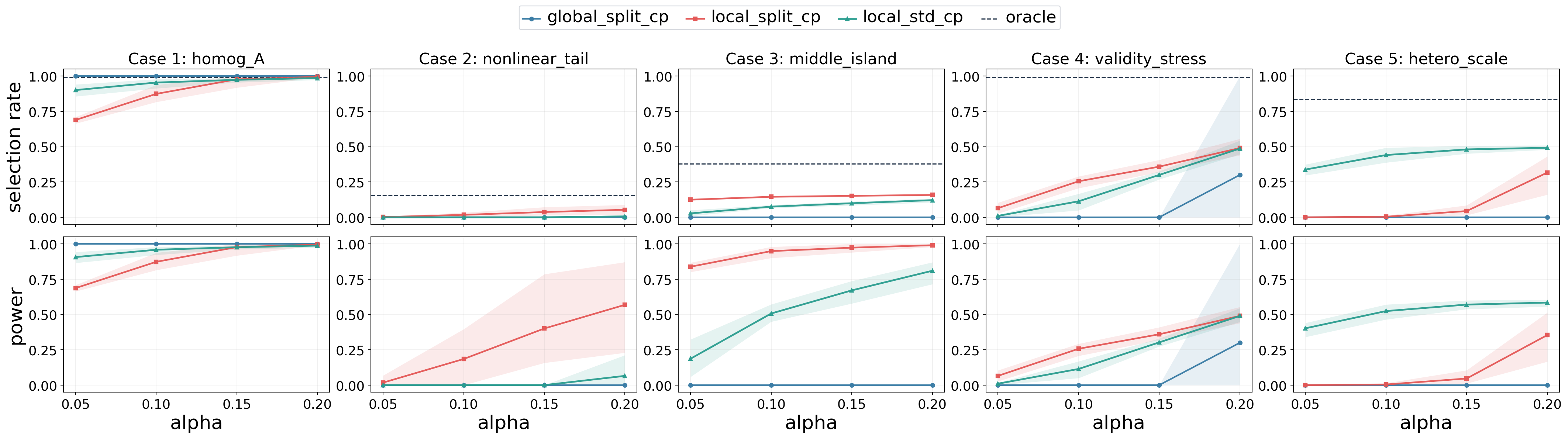}
\caption{Conformal selection versus $\alpha$. Top: selection rate with the dashed line indicating the oracle $A$-winning rate. Bottom: oracle $A$ power, $\mathbb{P}(\hat w_\alpha=A\mid\mu(X)<-\tau)$, for the conformal methods.}
\label{fig:selection_power}
\end{figure}

\paragraph{Results \#3: Diagnostics for the theoretical mechanisms.}
Figure~\ref{fig:biasvar} verifies the squared-loss bias--variance decomposition in Case~3.  Proposition~\ref{prop:squared_loss_decomp} is supported: outside the nonlinear island, the linear model has low variance and small bias, so $B$ wins locally; inside the island, the boosted tree's bias reduction dominates its variance cost, so $A$ wins. 

\begin{figure}[t]
    \centering
    \begin{minipage}{0.55\linewidth}
        \centering
        \includegraphics[width=\linewidth]{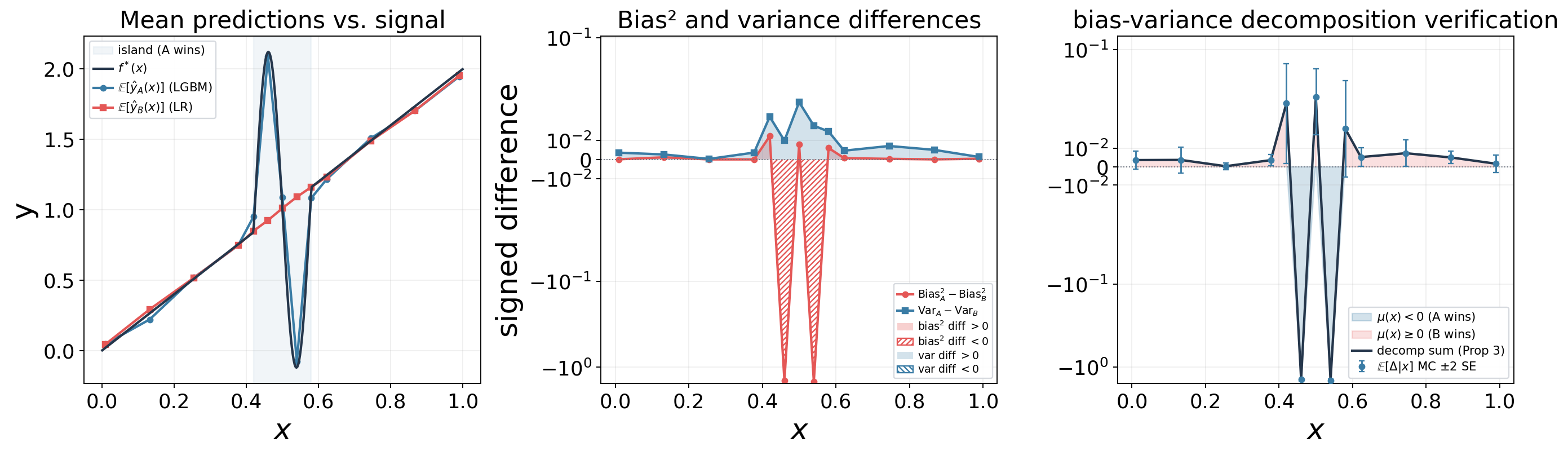}
        \caption{Bias--variance decomposition on Case~3. Left: signal and mean predictions. Middle: signed bias and variance components; negative values favor model~$A$ and positive values favor model~$B$. Right: their sum matches the Monte Carlo loss gap, showing $A$ wins inside the island and $B$ outside.}
        \label{fig:biasvar}
    \end{minipage}\hfill
    \begin{minipage}{0.41\linewidth}
        \centering
        \includegraphics[width=\linewidth,height=0.18\textheight,keepaspectratio]{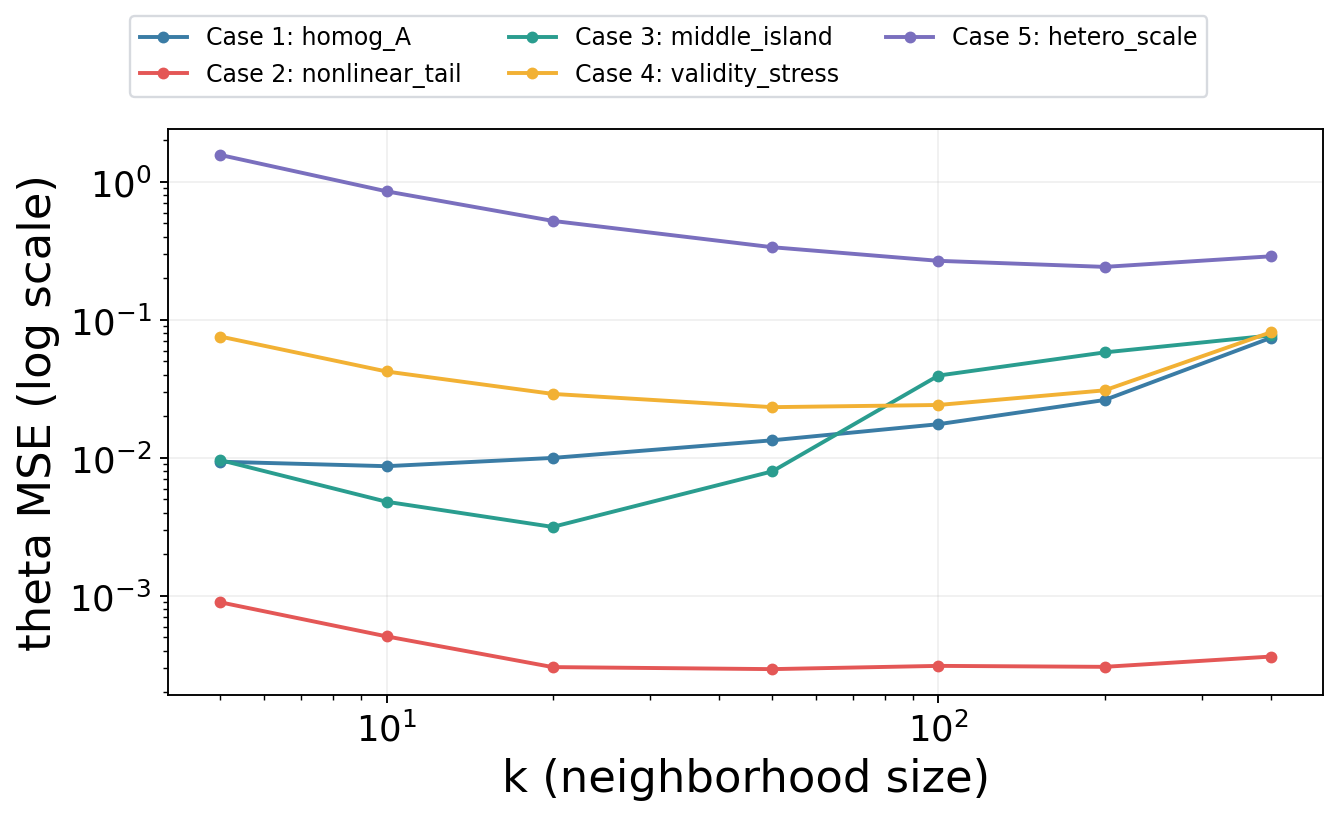}
        \caption{Local MSE versus neighborhood size $k$. Small $k$ gives noisy local estimates. The optimal $k$ is case-dependent.}
        \label{fig:thetamse}
    \end{minipage}
\end{figure}

Figure~\ref{fig:thetamse} studies the locality tradeoff behind Theorem~\ref{thm:local_winner_consistency}. The MSE of the local estimator first decreases and eventually increases when neighborhoods become too wide. Results of these five cases illustrates the optimal $K$ needs to be tuned.


\subsection{Real data experiments}
\label{subsec:real_exp_design}

We evaluate on four public regression benchmarks: \texttt{concrete}, \texttt{auto\_mpg}, \texttt{blog\_data}, and
\texttt{facebook\_1}.\footnote{\url{https://github.com/yromano/cqr}}\footnote{\url{https://archive.ics.uci.edu/dataset/9/auto+mpg}}
Model~$A$ is a small fully connected neural network (MLP) with one hidden layer, and model~$B$ is linear regression. The dataset
statistics are summarized in Table~\ref{tab:realdata_summary} in the appendix. Across the four datasets, the MLP has better global mean performance on average across seeds, while linear regression may still be locally superior in structured subregions. The goal is to find local regions where linear regression outperforms the MLP. Since no oracle map is available in real-data experiments, we report model~$B$'s selection rate and the conditional gain $\mathbb{E}[L_A-L_B\mid \hat w_\alpha=B]$, which measures the realized advantage of routing to model~$B$ in the declared region relative to using model~$A$ everywhere, and visualize the declared regions in a two-dimensional PCA projection.


\paragraph{Result \#1: Model A vs.\ model B winning regions.}
Figure~\ref{fig:real_regions} visualizes winner declarations in a 2D PCA projection, with colors denoting the across-seed frequency of declaring model~$B$. The global mean rule is a global decision, selecting all points as either $A$ or $B$ within each seed; intermediate colors arise only from variation across seeds. Local non-conformal maps (columns 1--2) identify subregions where model $B$ achieves lower loss. Columns 3--4 highlight our method's advantage: the global conformal rule abstains entirely, whereas our local conformal rule successfully certifies $B$-favorable regions. Though smaller than the non-conformal baseline due to rigorous false-winner control (Proposition~\ref{prop:conformal_validity}), our localized approach converts noisy estimates into conservative, statistically controlled subregions where model~$B$ is favored.

\begin{figure}[t]
\centering
\includegraphics[width=0.6\linewidth]{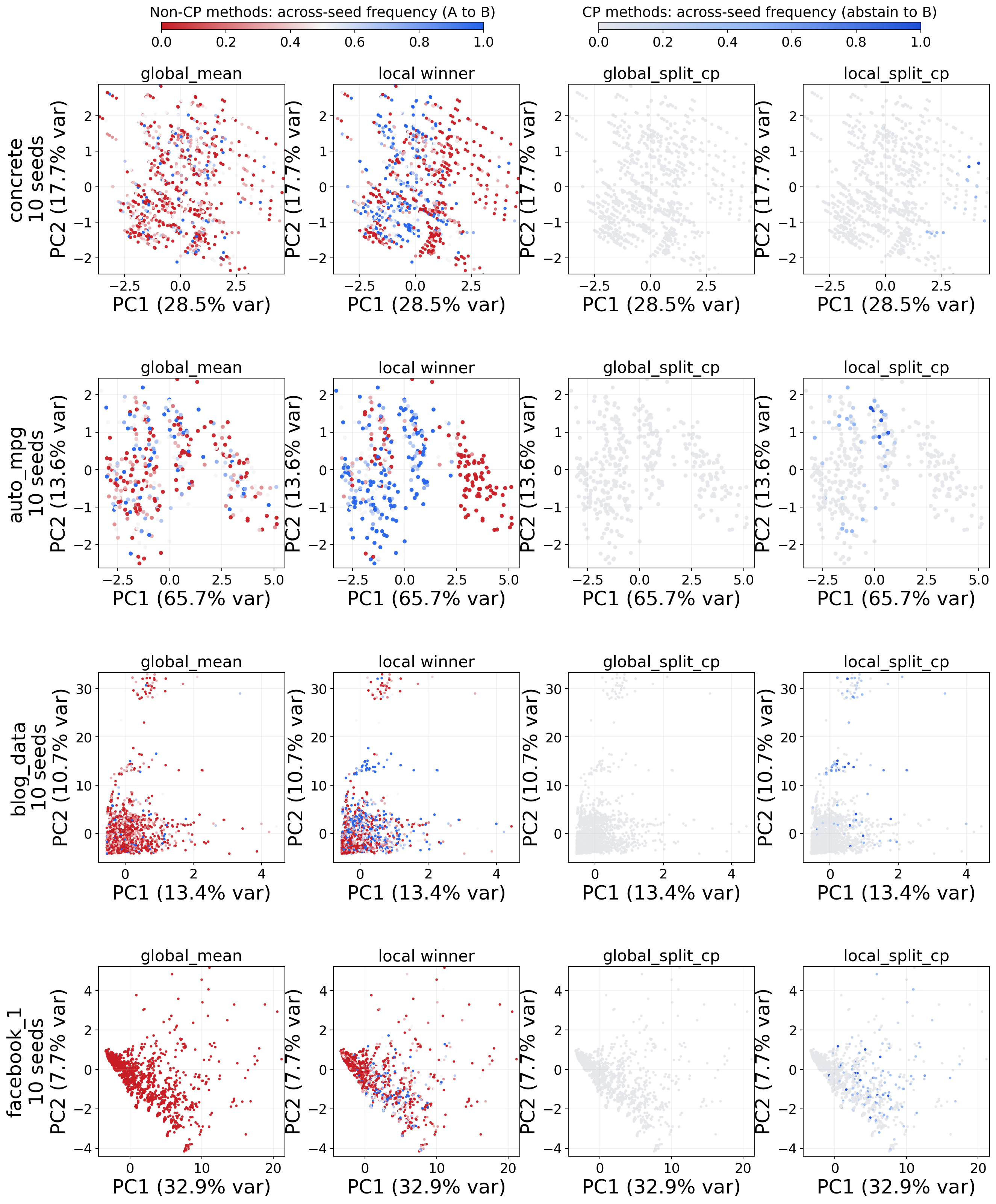}
\caption{A/B winner regions in a 2D PCA projection. Colors show across-seed selection frequency: red-to-blue for non-CP methods ($A$ to $B$), and gray-to-blue for CP methods (abstention to $B$).}
\label{fig:real_regions}
\end{figure}


\paragraph{Result \#2: Selection vs.\ conditional gain.}
Figure~\ref{fig:real_gain} plots conditional gain—the average realized test-set advantage of model~$B$ over model~$A$ within the declared region—against selection rate across varying $\alpha$. Increasing $\alpha$ expands declarations, though typically diluting concentration in high-gain areas. Non-conformal methods achieve higher selection rates but lower conditional gain, reflecting overly broad declarations that lack false-winner control. The global conformal baseline (yellow point) abstains entirely, yielding zero gain. Conversely, the local conformal method (red line) achieves the highest conditional gain, outperforming its non-conformal counterpart (blue point). This confirms its abstention mechanism effectively isolates a smaller, highly targeted region of genuine superiority.

\begin{figure}[t]
\centering
\includegraphics[width=0.9\linewidth]{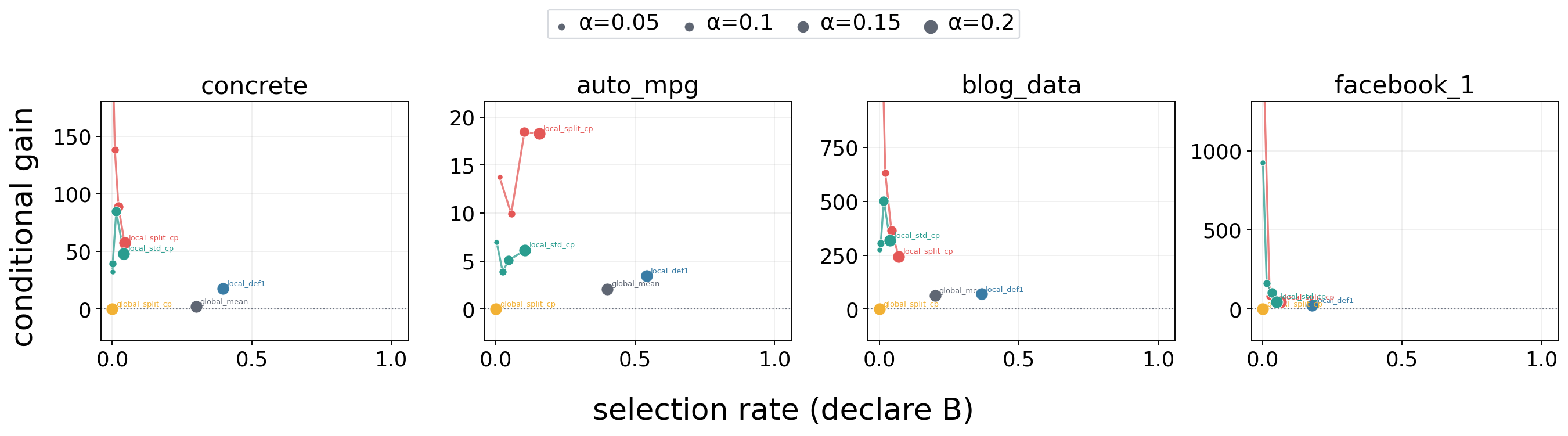}
\caption{Conditional gain versus selection rate on real datasets. Larger points correspond to a larger $\alpha$. 
Y-axis indicates a larger realized gain in the declared region, while x-axis indicates a higher selection rate. 
Local conformal methods select smaller but higher-gain regions.}
\label{fig:real_gain}
\end{figure}

\section{Conclusion}

We presented a novel localized framework for comparing predictive models: the conformalized conditional mean comparison score. To rigorously ground this methodology, we established three core theoretical properties: the asymptotic consistency of local winner estimation, the formal disconnect between global aggregate metrics and local superiority prevalence, and an ex-ante bias-variance decomposition that explicitly explains the mechanical emergence of local advantages. Our empirical experiments validate these theoretical claims, demonstrating the framework's capacity to detect heterogeneous model performance where global metrics fail.


\section{Limitations}
\label{sec:limitations}

Locally centered conformal comparison has several limitations. First, the independent three-way split $(I_{\mathrm{fit}},I_{\mathrm{est}},I_{\mathrm{cal}})$ is clean but sample-inefficient. Cross-fitting or out-of-bag prediction can recycle data in practice, but extending the exact finite-sample guarantee in Proposition~\ref{prop:conformal_validity} to such dependent constructions requires additional analysis. Second, localization in raw covariate space suffers from the curse of dimensionality. In sparse regions, the fallback $\widehat\theta_r(x_0)=\bar S_{\mathrm{est}}$ keeps the procedure well-defined and preserves validity, but sacrifices local adaptation; high-dimensional applications should therefore apply $K_r$ to a learned low-dimensional representation or sufficient summary. Third, the conformal guarantee is marginal for the realized future score, not a finite-sample confidence statement for $\mu(x_0)$ or $\theta(x_0;r)$. Near tie boundaries, where $\mu(x_0)\approx 0$, the method will often abstain rather than declare a winner; this is appropriate but may reduce coverage of weak-signal regions. Finally, constructing a full local best-model map over many target points $x_0$ introduces a multiple-comparisons problem, so simultaneous map-level error control requires external correction.

\newpage
{
\small
\bibliographystyle{abbrvnat}
\bibliography{refs}
}

\newpage
\appendix
\section{Appendix: Proofs}
\label{app:proofs}
\subsection{Proof of Proposition~\ref{prop:conformal_validity}}

\begin{remark}[Validity is agnostic to the score function]
\label{rmk:score_agnostic}
A fundamental property of split conformal prediction \cite{vovk2005, lei2018} is that finite-sample marginal coverage relies on exchangeability of the calibration and test residuals. The theory imposes no regularity, continuity, or boundedness assumptions on the comparison score $S(X,Y)$ itself. Because the centering and scaling functions, $\widehat\theta_r$ and $\widehat\sigma_r$, are fitted on a sample disjoint from the calibration and test sets, conditional exchangeability of the residuals is preserved. Consequently, the choice of score function and the choice of local centering or scaling affect statistical efficiency, such as selection rate and power, but not the finite-sample validity of the conformal bound.
\end{remark}

\begin{proof}[Proof of Proposition~\ref{prop:conformal_validity}]
Condition on the data in the fitting and estimation splits, denoted collectively by $\mathcal{D}_{0}$. After conditioning on $\mathcal{D}_{0}$, the fitted predictors, the localized center $\widehat\theta_r(\cdot)$, and the localized scale $\widehat\sigma_r(\cdot)$ are fixed measurable functions. The remaining randomness comes only from the calibration observations $(X_i,Y_i)_{i\in I_{\mathrm{cal}}}$ and the independent test point $(X_{n+1},Y_{n+1})$.

Let
\[
S_{n+1}:=S(X_{n+1},Y_{n+1})
\quad\text{and}\quad
R_{n+1}:=
\frac{S_{n+1}-\widehat\theta_r(X_{n+1})}
{\widehat\sigma_r(X_{n+1})}.
\]
Together with the calibration residuals $R_i$ defined in the main text, the collection
\[
\{R_i:i\in I_{\mathrm{cal}}\}\cup\{R_{n+1}\}
\]
is exchangeable conditional on $\mathcal{D}_{0}$.

Let $n_{\mathrm{cal}}=|I_{\mathrm{cal}}|$ and $k_\alpha=\lceil(n_{\mathrm{cal}}+1)(1-\alpha)\rceil$, as defined in Section~\ref{sec:conformal}. If $k_\alpha>n_{\mathrm{cal}}$, then $\widehat q_{1-\alpha}=\infty$ and the desired coverage statement is immediate. Otherwise, $\widehat q_{1-\alpha}=R_{(k_\alpha)}$, the $k_\alpha$-th order statistic of the calibration residuals. By exchangeability, using random tie-breaking only for the rank argument, the rank of $R_{n+1}$ among the $n_{\mathrm{cal}}+1$ residuals is uniform on $\{1,\ldots,n_{\mathrm{cal}}+1\}$. Moreover, the event $\{R_{n+1}>R_{(k_\alpha)}\}$ implies that this rank is larger than $k_\alpha$. Therefore,
\[
\mathbb{P}\left(R_{n+1}> \widehat q_{1-\alpha}\mid \mathcal{D}_{0}\right)
\le
\frac{n_{\mathrm{cal}}+1-k_\alpha}{n_{\mathrm{cal}}+1}
\le
\alpha,
\]
or equivalently,
\[
\mathbb{P}\left(R_{n+1}\le \widehat q_{1-\alpha}\mid \mathcal{D}_{0}\right)
\ge 1-\alpha.
\]

Because $\widehat\sigma_r(x)>0$ by construction, the event $\{R_{n+1}\le \widehat q_{1-\alpha}\}$ is equivalent to
\[
S_{n+1}
\le
\widehat\theta_r(X_{n+1})
+
\widehat\sigma_r(X_{n+1})\widehat q_{1-\alpha}
=
\widehat U_\alpha(X_{n+1}),
\]
where the last equality follows from the definition of the conformal upper bound. Marginalizing over $\mathcal{D}_{0}$ gives
\[
\mathbb{P}\left(S_{n+1}\le \widehat U_\alpha(X_{n+1})\right)\ge 1-\alpha.
\]

Finally, the false-winner event satisfies
\[
\{\widehat U_\alpha(X_{n+1})<0,\ S_{n+1}\ge 0\}
\subseteq
\{S_{n+1}> \widehat U_\alpha(X_{n+1})\}.
\]
Hence,
\[
\mathbb{P}\left(\widehat U_\alpha(X_{n+1})<0,\ S_{n+1}\ge 0\right)
\le
\mathbb{P}\left(S_{n+1}> \widehat U_\alpha(X_{n+1})\right)
\le \alpha.
\]
\end{proof}

\subsection{Proof of Theorem~\ref{thm:local_winner_consistency}}

\begin{proof}[Proof of Theorem~\ref{thm:local_winner_consistency}]
Throughout the proof, we condition on the fitted predictors produced by the
independent fitting split, so that the evaluation observations are i.i.d. and
\(\mu(x)=\E[S(X,Y)\mid X=x]\) is fixed. Define
\[
N_n\coloneqq \frac{1}{n}\sum_{i=1}^n K_{r_n}(X_i,x_0)S_i,
\qquad
D_n\coloneqq \frac{1}{n}\sum_{i=1}^n K_{r_n}(X_i,x_0).
\]
The ratio \(N_n/D_n\) is identical to the estimator in
\eqref{eq:local-estimator} whenever \(D_n>0\). On the event \(D_n=0\), we may
assign the estimator any fixed value, say zero; the argument below shows that
\(\mathbb{P}(D_n>0)\to1\), so this finite-sample convention has no asymptotic
effect.

Let
\[
\kappa_0\coloneqq \int_{\mathbb{R}^d}K(u)\,du.
\]
By the theorem's kernel condition, \(0<\kappa_0<\infty\). Since \(K\) is
compactly supported, there exists \(R_K<\infty\) such that \(K(u)=0\) whenever
\(\|u\|>R_K\). Since \(p_X\) is continuous at \(x_0\) and \(p_X(x_0)>0\),
\(p_X\) is locally bounded near \(x_0\) and remains positive on a sufficiently
small neighborhood of \(x_0\). Thus, for all large \(n\), the ordinary kernel
localization \(x_0+r_nu\) with \(K(u)\neq0\) stays inside a neighborhood where
the local continuity and moment conditions apply.

First consider the denominator. By the change of variables
\(u=(x-x_0)/r_n\),
\[
\E[D_n]
=
\int_{\mathbb{R}^d}K(u)p_X(x_0+r_nu)\,du.
\]
Because \(p_X\) is continuous at \(x_0\) and \(K\) is bounded with compact
support, dominated convergence gives
\[
\E[D_n]
\longrightarrow
p_X(x_0)\int_{\mathbb{R}^d}K(u)\,du
=
p_X(x_0)\kappa_0
\coloneqq d_0.
\]
Since \(p_X(x_0)>0\) and \(\kappa_0>0\), we have \(d_0>0\).

Next consider the numerator. By iterated expectation,
\[
\E[N_n]
=
\E\!\left[K_{r_n}(X,x_0)\mu(X)\right].
\]
Using the same change of variables,
\[
\E[N_n]
=
\int_{\mathbb{R}^d}
K(u)\mu(x_0+r_nu)p_X(x_0+r_nu)\,du.
\]
The local second-moment condition and Jensen's inequality imply that
\(\mu\) is locally bounded near \(x_0\):
\[
|\mu(x)|
\le
\E[|S(X,Y)|\mid X=x]
\le
\E[S(X,Y)^2\mid X=x]^{1/2}.
\]
Together with the local boundedness of \(p_X\), this gives an integrable
dominating function proportional to \(K(u)\). Since both \(\mu\) and \(p_X\)
are continuous at \(x_0\), dominated convergence yields
\[
\E[N_n]
\longrightarrow
\mu(x_0)p_X(x_0)\int_{\mathbb{R}^d}K(u)\,du
=
\mu(x_0)p_X(x_0)\kappa_0.
\]

It remains to show that \(N_n\) and \(D_n\) concentrate around their
expectations. For the denominator,
\[
\Var(D_n)
=
\frac{1}{n}\Var\!\left(K_{r_n}(X,x_0)\right)
\le
\frac{1}{n}\E\!\left[K_{r_n}(X,x_0)^2\right].
\]
Again changing variables,
\[
\E\!\left[K_{r_n}(X,x_0)^2\right]
=
r_n^{-d}
\int_{\mathbb{R}^d}
K(u)^2p_X(x_0+r_nu)\,du.
\]
Since \(K\) is bounded and compactly supported, \(K^2\) is integrable; since
\(p_X\) is locally bounded at \(x_0\), the integral is \(O(1)\). Hence
\[
\Var(D_n)
=
O\!\left(\frac{1}{nr_n^d}\right)
\longrightarrow0,
\]
because \(nr_n^d\to\infty\). Therefore
\[
D_n-\E[D_n]\overset{p}{\longrightarrow}0.
\]

For the numerator,
\[
\Var(N_n)
=
\frac{1}{n}\Var\!\left(K_{r_n}(X,x_0)S\right)
\le
\frac{1}{n}\E\!\left[K_{r_n}(X,x_0)^2S^2\right].
\]
By iterated expectation and the same change of variables,
\[
\E\!\left[K_{r_n}(X,x_0)^2S^2\right]
=
r_n^{-d}
\int_{\mathbb{R}^d}
K(u)^2
\E\!\left[S^2\mid X=x_0+r_nu\right]
p_X(x_0+r_nu)\,du.
\]
For all large \(n\), the compact support of \(K\) keeps \(x_0+r_nu\) inside
the local neighborhood where \(\E[S^2\mid X=x]\) is bounded and where \(p_X\)
is bounded. Therefore
\[
\E\!\left[K_{r_n}(X,x_0)^2S^2\right]
=
O(r_n^{-d}),
\]
and hence
\[
\Var(N_n)
=
O\!\left(\frac{1}{nr_n^d}\right)
\longrightarrow0.
\]
Thus
\[
N_n-\E[N_n]\overset{p}{\longrightarrow}0.
\]

Combining the expectation limits and concentration bounds gives
\[
D_n\overset{p}{\longrightarrow}p_X(x_0)\kappa_0
\]
and
\[
N_n\overset{p}{\longrightarrow}\mu(x_0)p_X(x_0)\kappa_0.
\]
Because \(p_X(x_0)\kappa_0>0\), we also have
\[
\mathbb{P}(D_n>0)\longrightarrow1.
\]
Consequently, by Slutsky's theorem,
\[
\hat\mu_n(x_0)
=
\frac{N_n}{D_n}
\overset{p}{\longrightarrow}
\mu(x_0).
\]

It remains to translate consistency of the local mean estimator into
consistency of the local winner. If \(\mu(x_0)<0\), then
\(\delta^\star(x_0)=A\), and
\[
\mathbb{P}\!\left(\hat\delta_n(x_0)\neq A\right)
=
\mathbb{P}\!\left(\hat\mu_n(x_0)\ge0\right).
\]
Moreover,
\[
\{\hat\mu_n(x_0)\ge0\}
\subseteq
\left\{
|\hat\mu_n(x_0)-\mu(x_0)|
\ge
|\mu(x_0)|
\right\}.
\]
Since \(\hat\mu_n(x_0)\overset{p}{\to}\mu(x_0)\), the probability of the event
on the right converges to zero. Hence
\[
\mathbb{P}\!\left(\hat\delta_n(x_0)=A\right)\longrightarrow1.
\]
The case \(\mu(x_0)>0\) is identical and gives
\[
\mathbb{P}\!\left(\hat\delta_n(x_0)=B\right)\longrightarrow1.
\]
Therefore, whenever \(\mu(x_0)\neq0\),
\[
\mathbb{P}\!\left(
\hat\delta_n(x_0)=\delta^\star(x_0)
\right)
\longrightarrow1.
\]

The same proof applies to the estimation-split estimator in Section~4 after
replacing \(n\) by \(|I_{\mathrm{est}}|\), with the corresponding radius
sequence satisfying the same shrinking-neighborhood and diverging-effective-
sample-size conditions.
\end{proof}

\subsection{Proof of Proposition~\ref{prop:global_local_mismatch}}

\begin{proof}[Proof of Proposition~\ref{prop:global_local_mismatch}]
We first prove consistency of the aggregate selector for the sign of the global mean score. Let $S_i:=S(X_i,Y_i)$. By definition,
\[
\bar S_n=\frac{1}{n}\sum_{i=1}^n S_i.
\]
Since $\mathbb{E}[|S(X,Y)|]<\infty$, the weak law of large numbers implies
\[
\bar S_n \overset{p}{\longrightarrow} \mathbb{E}[S(X,Y)]=\mu_{\mathrm{glob}}.
\]

Suppose first that $\mu_{\mathrm{glob}}>0$. Then the true aggregate winner is $B$, and
\[
\mathbb{P}\left(\hat\delta_n^{\mathrm{agg}}=B\right)
=
\mathbb{P}(\bar S_n\ge 0).
\]
Hence
\[
\mathbb{P}(\bar S_n<0)
\le
\mathbb{P}\left(|\bar S_n-\mu_{\mathrm{glob}}|\ge \mu_{\mathrm{glob}}\right)
\longrightarrow 0,
\]
so
\[
\mathbb{P}\left(\hat\delta_n^{\mathrm{agg}}=B\right)\longrightarrow 1.
\]
The case $\mu_{\mathrm{glob}}<0$ is identical and yields
\[
\mathbb{P}\left(\hat\delta_n^{\mathrm{agg}}=A\right)\longrightarrow 1.
\]
Therefore,
\[
\mathbb{P}\left(\hat\delta_n^{\mathrm{agg}}=\delta^{\mathrm{agg},\star}\right)\longrightarrow 1.
\]

We next prove the existence statement. Fix any $q\in(1/2,1)$. Let $X\sim \mathrm{Unif}(0,1)$, and let $\mathcal R_A=[0,q]$. Define
\[
\mu(x)=
\begin{cases}
-a, & x\in \mathcal R_A,\\
b, & x\notin \mathcal R_A,
\end{cases}
\]
where $a>0$ and $b>qa/(1-q)$. Then
\[
\mathbb{P}(\mu(X)<0)=\mathbb{P}(X\in \mathcal R_A)=q.
\]
Moreover,
\[
\mathbb{E}[\mu(X)]
=
-q\,a+(1-q)b
>
0.
\]
Let $\eta$ be any mean-zero noise, independent of $X$, with $\mathbb{E}[|\eta|]<\infty$ and $\mathrm{Var}(\eta)>0$, and define
\[
S:=\mu(X)+\eta.
\]
Then $\mathbb{E}[|S|]<\infty$, $\mathbb{E}[S\mid X=x]=\mu(x)$, and the score is non-degenerate because $\mathrm{Var}(S\mid X=x)=\mathrm{Var}(\eta)>0$. Therefore
\[
\mathbb{P}(\mu(X)<0)=q
\qquad\text{and}\qquad
\mathbb{E}[S]>0.
\]
Applying the aggregate selector to i.i.d. draws from this constructed distribution, the first part of the proof gives
\[
\mathbb{P}\left(\hat\delta_n^{\mathrm{agg}}=B\right)\longrightarrow 1.
\]
This proves that aggregate selection can asymptotically favor model $B$ even though model $A$ is locally superior on at least a $q$-fraction of the covariate distribution.
\end{proof}

\subsection{Proof of Proposition~\ref{prop:squared_loss_decomp}}

\begin{proof}[Proof of Proposition~\ref{prop:squared_loss_decomp}]
Fix $x\in\cX$. Conditional on $X=x$, write $Y=f^\star(x)+\varepsilon$ with $\E[\varepsilon\mid X=x]=0$. For a generic model $m\in\{A,B\}$,
\begin{align}
L_m(X,Y)\mid X=x
&=\bigl(\hat f_m(x)-f^\star(x)-\varepsilon\bigr)^2 \nonumber\\
&=\bigl(\hat f_m(x)-f^\star(x)\bigr)^2 - 2\varepsilon\bigl(\hat f_m(x)-f^\star(x)\bigr) + \varepsilon^2.
\label{eq:sq-expand-app}
\end{align}

Taking expectation over both the training randomness and the fresh test noise, we isolate the cross-term. Because the fresh test noise $\varepsilon$ is independent of the fitting sample $\mathcal{D}$ conditional on $X$, we have $\mathbb{E}[\varepsilon \mid X=x, \mathcal{D}] = \mathbb{E}[\varepsilon \mid X=x] = 0$. Therefore,
\begin{align}
\E\bigl[\varepsilon \hat f_m(x)\mid X=x\bigr]
&= \E\Bigl[\E\bigl[\varepsilon \hat f_m(x)\mid X=x,\mathcal{D}\bigr]\mid X=x\Bigr] \nonumber\\
&= \E\Bigl[\hat f_m(x)\,\E\bigl[\varepsilon\mid X=x,\mathcal{D}\bigr]\mid X=x\Bigr] \nonumber\\
&= 0.
\label{eq:cross-term-vanish-app}
\end{align}

Consequently,
\begin{equation}
\E[L_m(X,Y)\mid X=x] = \E\bigl[(\hat f_m(x)-f^\star(x))^2\bigr]+\E[\varepsilon^2\mid X=x].
\label{eq:loss-noise-split-app}
\end{equation}

Applying the standard bias--variance identity $\E[(Z-a)^2]=\Var(Z)+(\E[Z]-a)^2$ with $Z=\hat f_m(x)$ and $a=f^\star(x)$ gives
\begin{equation}
\E[L_m(X,Y)\mid X=x] = \mathrm{Var}_m(x)+\mathrm{Bias}_m^2(x)+\E[\varepsilon^2\mid X=x].
\label{eq:model-risk-decomp-app}
\end{equation}

Subtracting the identities for $m=A$ and $m=B$ cancels the irreducible noise term $\E[\varepsilon^2\mid X=x]$, yielding
\[
\E[S(X,Y)\mid X=x] = \mathrm{Bias}_A^2(x)-\mathrm{Bias}_B^2(x)+\mathrm{Var}_A(x)-\mathrm{Var}_B(x).
\]
Multiplying both sides by $K_r(X,x_0)$, taking expectation, and dividing by $\E[K_r(X,x_0)]$ yields \eqref{eq:local-bias-var-main}.
\end{proof}

\section{Appendix: Additional Experimental Details}
\label{app:additional_exp}

This appendix provides full details for the experiments reported in Section~\ref{sec:experiments}. We include the complete data-generating processes, visual illustrations of the fitted models, detailed non-conformal winner-identification metrics, and real datasets summary.

\subsection{Synthetic data-generating processes}
\label{app:DGP}
Table~\ref{tab:dgps_app} reports the full synthetic data-generating processes. Figure~\ref{fig:dgp_app} visualizes the five designs, together with representative fitted models. Case~1 is a baseline where $A$ is better almost everywhere. Cases~2 and~3 create global-local mismatch. In Case~3 (Figure~\ref{fig:biasvar}), a narrow nonlinear island is strong enough to make the global average favor $A$. Specifically, A wins inside [0.42,0.58] and B wins about 84\% of the region. Case~4 increases the noise level while keeping the Case~1 signal, creating a stress test for false declarations. Case~5 introduces strong heteroskedasticity, testing whether local scaling improves conformal efficiency. 

\begin{table}[h]
\centering
\scriptsize
\caption{Synthetic data-generating processes. All cases use $X\sim\mathrm{Unif}[0,1]$ and $Y=f^\star(X)+\varepsilon$.}
\label{tab:dgps_app}
\begin{tabularx}{\textwidth}{@{}c l X p{2.8cm} p{2.2cm}@{}}
\toprule
ID & Name & $f^\star(x)$ / noise & Oracle pattern & Main role \\
\midrule
1 & \texttt{homog\_a}        & $2(x-0.5)+0.75\,\mathrm{sign}(\sin(6\pi x))$;\ $\sigma=0.1$ & $A$ wins nearly everywhere & power baseline \\
2 & \texttt{nonlinear\_tail} & $2x$ for $x\le 0.9$; $2x+0.30\sin(2\pi(x-0.9)/0.1)$ otherwise; $\sigma=0.30$ & $B$ wins globally; $A$ wins in the right tail & global/local mismatch \\
3 & \texttt{middle\_island}  & $2x$ outside $[0.42,0.58]$; $2x+1.20\sin(2\pi(x-0.42)/0.16)$ inside; $\sigma=0.30$ & $A$ wins globally through a narrow island; $B$ wins on most $x$ & map recovery; bandwidth sensitivity \\
4 & \texttt{validity\_stress}& Same signal as Case~1; $\sigma=0.40$ & $A$ wins globally; realized scores are noisy & CP necessity \\
5 & \texttt{hetero\_scale}   & Same signal as Case~1; $\sigma(x)=0.1$ for $x\le 0.5$ and $2.0$ otherwise & $A$ wins globally; score dispersion changes sharply & local scaling \\
\bottomrule
\end{tabularx}
\end{table}

\begin{figure}[h]
\centering
\includegraphics[width=\linewidth]{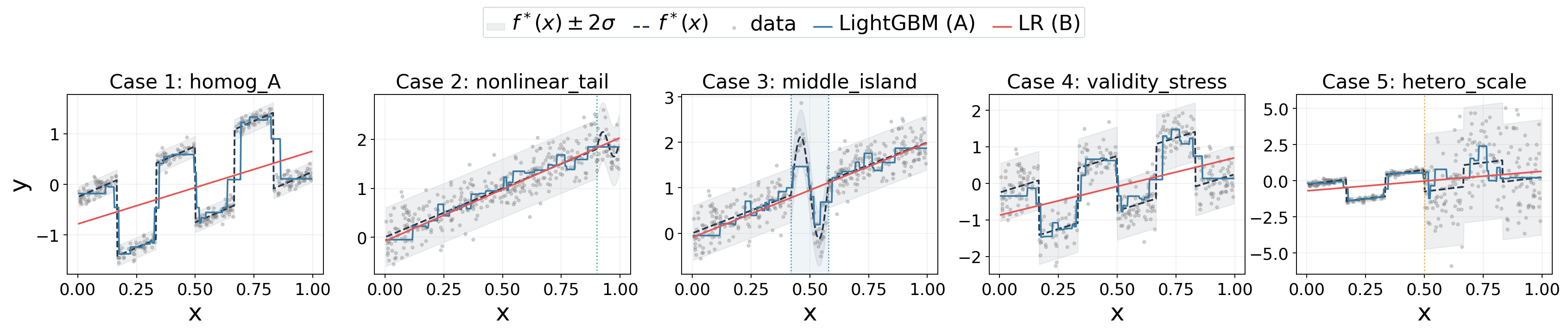}
\caption{Synthetic DGPs. Black dashed: $f^\star(x)$; gray band: $f^\star(x)\pm 2\sigma(x)$; blue: a representative LightGBM ($A$) fit; red: linear ($B$) fit.}
\label{fig:dgp_app}
\end{figure}

\subsection{Evaluation metrics for synthetic experiments}
\label{app:metrics}

Table~\ref{tab:metrics} defines all evaluation metrics used in synthetic experiments, separated into non-conformal winner metrics that assess oracle map recovery and conformal winner metrics that assess the reliability, selectivity, and realized gain of declared $A$ regions.

\begin{table}[h]
\centering
\footnotesize
\caption{Evaluation metrics for synthetic experiments. Non-conformal winner metrics assess whether the estimated map matches the oracle local winner regions. Conformal winner metrics assess whether declared $A$ regions are reliable, selective, and useful.}
\label{tab:metrics}
\begin{tabularx}{\textwidth}{@{}p{2.5cm} p{2.0cm} X p{4.0cm}@{}}
\toprule
Role & Metric & Definition & Interpretation \\
\midrule
\multirow{3}{2.6cm}{Non-conformal winner metrics} & Winner accuracy & $\Pbb(\hat w_\alpha(X)=w^\star(X)\mid|\mu(X)|>\tau)$ & Overall agreement with the oracle winner map outside the tie band \\
& Region-IoU$_A$ & IoU between predicted and oracle $A$ regions & Spatial overlap with the oracle $A$-winning region \\
& Region-IoU$_B$ & IoU between predicted and oracle $B$ regions & Spatial overlap with the oracle $B$-winning region \\
\midrule
\multirow{4}{2.6cm}{Conformal winner metrics} & Selection rate & $\Pbb(\hat w_\alpha=A)$ & Fraction of test points declared as $A$ wins \\
& False winner rate & $\Pbb(\hat w_\alpha=A,\,S\ge 0)$ & Empirical false-winner rate for $A$ declarations \\
& Power & $\Pbb(\hat w_\alpha=A\mid\mu(X)<-\tau)$ & Coverage of the true oracle $A$ region \\
& Conditional gain & $\E[L_B-L_A\mid\hat w_\alpha=A]=-\E[S\mid\hat w_\alpha=A]$ & Realized value of the selected $A$ declarations \\
\bottomrule
\end{tabularx}
\end{table}

\subsection{Non-conformal winner-identification results}

Table~\ref{tab:winner_metrics_app} reports local-winner identification results for the forced non-conformal winner maps. These results complement the main text by showing full accuracy and region-overlap values across all five synthetic cases. The table is not used to establish finite-sample validity; conformal validity is evaluated separately through false-winner rates in the main paper.

\begin{table}[h]
\centering
\scriptsize
\caption{Local-winner identification metrics at $\alpha=0.10$ (mean over $10$ seeds). Acc: winner accuracy outside the tie band; IoU$_A$ and IoU$_B$: intersection-over-union with the oracle $A$ and $B$ regions. All rows are forced winner maps and do not carry conformal guarantees.}
\label{tab:winner_metrics_app}
\begin{tabular}{@{}l l c c c@{}}
\toprule
Case & Method & Acc & IoU$_A$ & IoU$_B$ \\
\midrule
\multirow{4}{*}{1: \texttt{homog\_a}}
& \texttt{global\_mean}      & 0.992 & 0.992 & 0.000 \\
& \texttt{local\_def1}       & 0.992 & 0.992 & 0.000 \\
& \texttt{local\_def2}       & 0.992 & 0.992 & 0.000 \\
& \texttt{local\_def3}       & 0.988 & 0.988 & 0.056 \\
\midrule
\multirow{4}{*}{2: \texttt{nonlinear\_tail}}
& \texttt{global\_mean}      & 0.877 & 0.000 & 0.820 \\
& \texttt{local\_def1}       & 0.766 & 0.268 & 0.676 \\
& \texttt{local\_def2}       & 0.714 & 0.238 & 0.628 \\
& \texttt{local\_def3}       & 0.881 & 0.014 & 0.822 \\
\midrule
\multirow{4}{*}{3: \texttt{middle\_island}}
& \texttt{global\_mean}      & 0.293 & 0.316 & 0.000 \\
& \texttt{local\_def1}       & 0.823 & 0.418 & 0.623 \\
& \texttt{local\_def2}       & 0.783 & 0.398 & 0.582 \\
& \texttt{local\_def3}       & 0.908 & 0.340 & 0.762 \\
\midrule
\multirow{4}{*}{4: \texttt{validity\_stress}}
& \texttt{global\_mean}      & 0.992 & 0.992 & 0.000 \\
& \texttt{local\_def1}       & 0.986 & 0.986 & 0.000 \\
& \texttt{local\_def2}       & 0.985 & 0.985 & 0.000 \\
& \texttt{local\_def3}       & 0.515 & 0.512 & 0.012 \\
\midrule
\multirow{4}{*}{5: \texttt{hetero\_scale}}
& \texttt{global\_mean}      & 0.855 & 0.851 & 0.000 \\
& \texttt{local\_def1}       & 0.816 & 0.791 & 0.328 \\
& \texttt{local\_def2}       & 0.813 & 0.790 & 0.320 \\
& \texttt{local\_def3}       & 0.650 & 0.584 & 0.290 \\
\bottomrule
\end{tabular}
\end{table}

\subsection{Real datasets summary}

Table~\ref{tab:realdata_summary} reports the size, dimensionality, and global comparison score distribution for the four real-data benchmarks used in Section~\ref{subsec:real_exp_design}.

\begin{table}[h]
\centering
\small
\caption{Real datasets summary. $\bar S$ is the global mean comparison score on the calibration split, reported as mean\,$\pm$\,std across $10$ seeds. $S<0$ favors model~$A$ (MLP).}
\label{tab:realdata_summary}
\begin{tabular}{@{}lrrr@{}}
\toprule
Dataset & $n$ & $d$ & $\bar S$ \\
\midrule
\texttt{concrete}    & 1{,}030  &   8 & $-0.6 \pm 3.1$  \\
\texttt{auto\_mpg}   &   398    &   7 & $-0.5 \pm 3.5$  \\
\texttt{blog\_data}  & 52{,}397 & 280 & $-56.9 \pm 63.4$  \\
\texttt{facebook\_1} & 40{,}948 &  53 & $-115.7 \pm 133.5$ \\
\bottomrule
\end{tabular}
\end{table}
\section{Appendix: Ablation Studies}
\label{subsec:exp_ablation}

\paragraph{Score family.}
Figure~\ref{fig:score_family} replaces \texttt{gap\_mse} with the standardized gap $s_{\mathrm{std}}(L_A,L_B;x)=(L_A-L_B)/\hat v(x)$ and the log-ratio $T=\log((L_A+\tau)/(L_B+\tau))$. The conformal validity guarantee is score-agnostic for the local CP construction, but selection and power vary. \texttt{gap\_mse} is the only score with nonzero power on all five cases and is strongest on the two boundary cases and the validity-stress case. The log-ratio and standardized gap improve power on the homogeneous and heteroskedastic cases, but they lose the nonlinear-tail signal and are weaker on Case~3. We therefore keep \texttt{gap\_mse} as the primary score and treat the alternatives as useful but case-dependent sensitivity checks.

\begin{figure}[t]
\centering
\includegraphics[width=\linewidth]{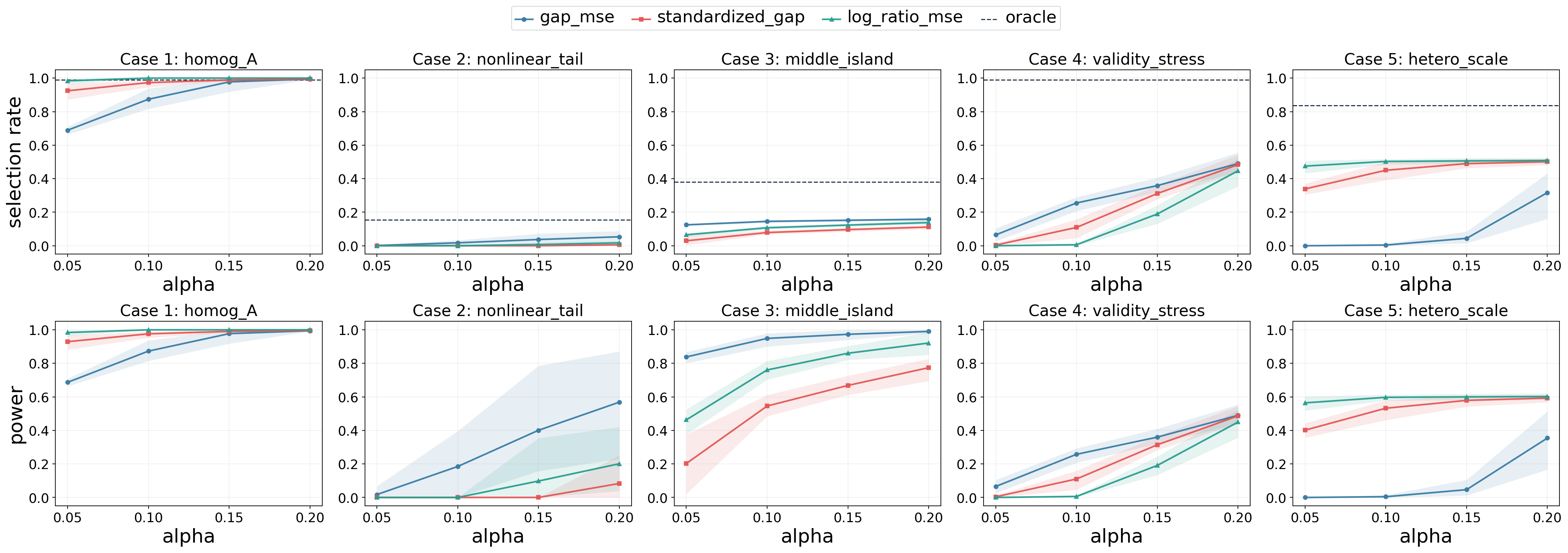}
\caption{Score-family ablation. Selection rate (top) and oracle $A$ power (bottom) for \texttt{local\_split\_cp} under the three scores \texttt{gap\_mse}, \texttt{standardized\_gap}, and \texttt{log\_ratio\_mse}, all five cases.}
\label{fig:score_family}
\end{figure}

\paragraph{Locality parameter $k$.}
Figure~\ref{fig:k_sweep} sweeps $k\in\{5,10,20,50,100,200,400\}$ and plots selection rate (top) and oracle $A$ power (bottom) for \texttt{local\_split\_cp} on all five cases at $\alpha=0.10$. The homogeneous Case~1 is essentially flat in $k$ as expected. Cases with local structure split, such as Case~2, loses power for large $k$ because the tail signal is averaged away.

\begin{figure}[t]
\centering
\includegraphics[width=\linewidth]{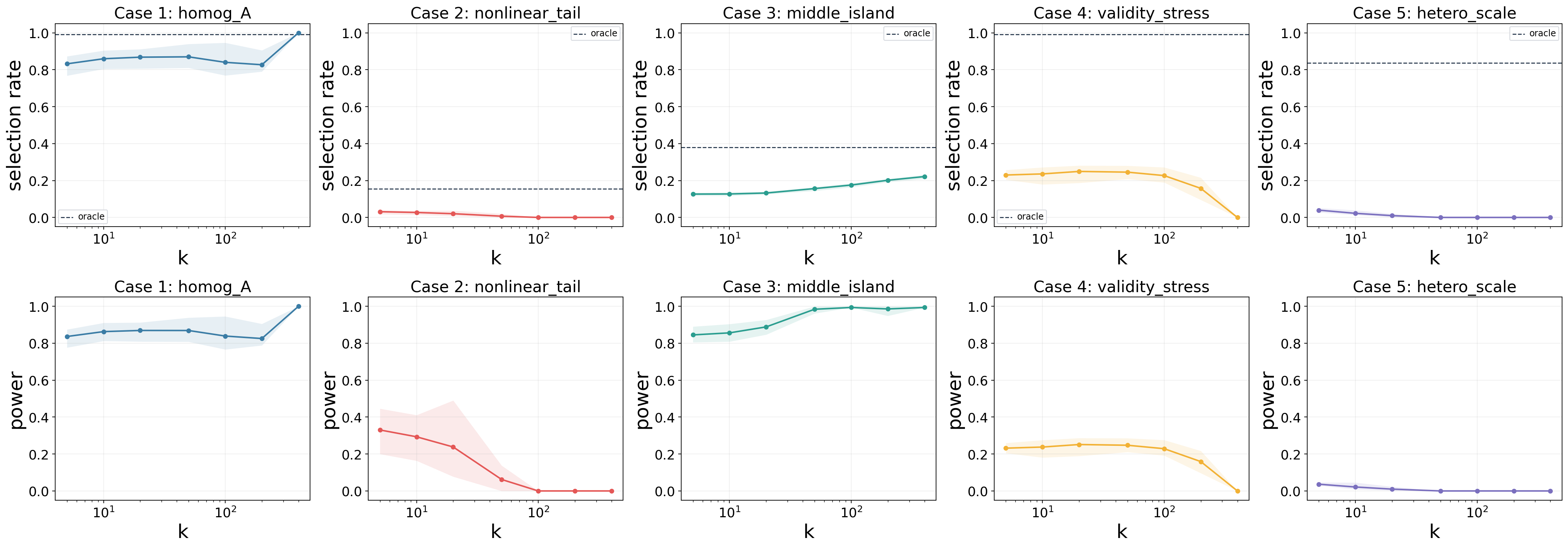}
\caption{Locality parameter ablation. Selection rate (top) and oracle $A$ power (bottom) for \texttt{local\_split\_cp} as $k$ varies, all five cases at $\alpha=0.10$.}
\label{fig:k_sweep}
\end{figure}

\paragraph{Covariate dimension.}
Figure~\ref{fig:highdim} keeps the signal in the first coordinate of $X$ and pads the remaining $d-1$ coordinates with i.i.d.\ uniform nuisance covariates, sweeping $d\in\{1,10,30,100\}$. Cases with local boundary structure (Cases~2 and~3) lose power with $d$ increases, consistent with the curse of dimensionality for $k$-NN. Case~1 is much less sensitive at moderate $d$ because there is no boundary to recover. In a real high-dimensional application, the localizer should be applied to a learned low-dimensional representation rather than the raw covariates, as discussed in Section~4.

\begin{figure}[t]
\centering
\includegraphics[width=\linewidth]{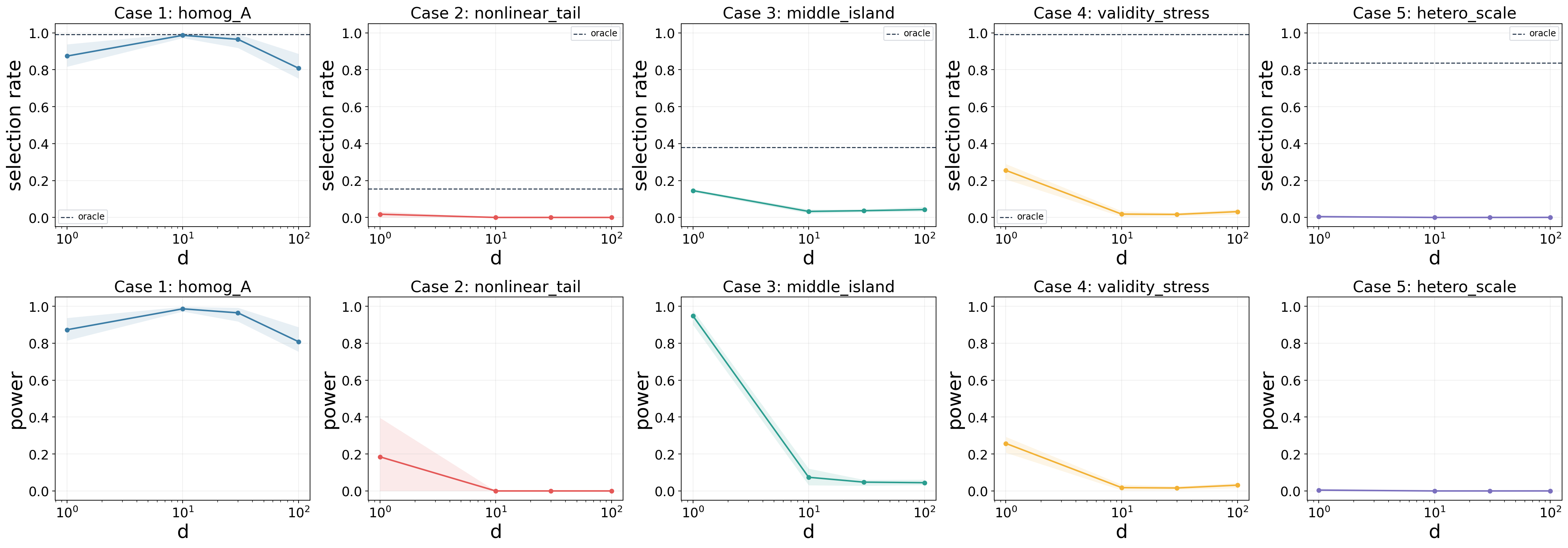}
\caption{Covariate-dimension ablation. horizontal axis is the ambient dimension $d$, with the signal kept in the first coordinate and the remaining $d-1$ coordinates filled with uniform nuisance.}
\label{fig:highdim}
\end{figure}

\section{Appendix: Compute Resources}
\label{app:compute_resources}

All experiments were run on a single MacBook Pro with an Apple M3 chip (8 physical cores), 16\,GB unified memory, and macOS~14.6 (Sonoma). No GPU was used; all computation relied on the CPU. The software environment used Python~3.12.7 with NumPy~1.26.4, pandas~2.2.3, scikit-learn~1.6.1, LightGBM~4.5.0, matplotlib~3.7.5, and SciPy~1.15.2.

For the synthetic experiments in Section~\ref{subsec:exp_design}, a full run over all 10 random seeds took approximately 10 minutes. For the real-data experiments in Section~\ref{subsec:real_exp_design}, a full run over all benchmark datasets took a similar amount of time. Across all experiments reported in the paper, the total compute budget was well under 1 CPU-hour. The experiments consist of small-scale regression tasks and do not require large-scale distributed training.

We used 10 random seeds (0--9) for all experiments. No pretrained models were used. For the synthetic experiments, model $A$ is LightGBM and model $B$ is linear regression, both trained from scratch on each data split using the hyperparameters described in Section~\ref{subsec:exp_design}. For the real-data experiments, model $A$ is a small MLP (scikit-learn \texttt{MLPRegressor} with dataset-specific hidden layer sizes, \texttt{max\_iter=1000}, and early stopping) and model $B$ is linear regression; both are trained from scratch on each split as described in Section~\ref{subsec:real_exp_design}. No deep learning framework beyond scikit-learn was required.

\clearpage

\end{document}